\documentclass[12pt, anon]{l4dc2024}


\title[Submodular Information Selection for Hypothesis Testing with Misclassification Penalties]{Submodular Information Selection for Hypothesis Testing with Misclassification Penalties}
\usepackage{times}
\usepackage{appendix}
\newtheorem*{theorem*}{Theorem}
\newtheorem*{lemma*}{Lemma}
\newtheorem*{corollary*}{Corollary}
\usepackage{enumitem}
\usepackage{algorithm}
\usepackage{algpseudocode}
\usepackage{wrapfig}
\newtheorem{Problem}{Problem}

\newcommand{\ef}{f_{\theta_p}}


\coltauthor{\Name{Jayanth Bhargav} \Email{jbhargav@purdue.edu}\\
 \Name{Mahsa Ghasemi} \Email{mahsa@purdue.edu}\\
 \Name{Shreyas Sundaram} \Email{sundara2@purdue.edu}\\
 \addr Elmore Family School of Electrical \& Computer Engineering, Purdue University, IN 47907 USA}


\begin{document}

\maketitle
\begin{abstract}%
 We consider the problem of selecting an optimal subset of information sources for a hypothesis testing/classification task where the goal is to identify the true state of the world from a finite set of hypotheses, based on finite observation samples from the sources. In order to characterize the learning performance, we propose a misclassification penalty framework, which enables non-uniform treatment of different misclassification errors. In a centralized Bayesian learning setting, we study two variants of the subset selection problem: (i) selecting a minimum cost information set to ensure that the maximum penalty of misclassifying the true hypothesis is below a desired bound and (ii) selecting an optimal information set under a limited budget to minimize the maximum penalty of misclassifying the true hypothesis. Under certain assumptions, we prove that the objective (or constraints) of these combinatorial optimization problems are weak (or approximate) submodular, and establish high-probability performance guarantees for greedy algorithms. Further, we propose an alternate metric for information set selection which is based on the total penalty of misclassification. We prove that this metric is submodular and establish near-optimal guarantees for the greedy algorithms for both the information set selection problems. Finally, we present numerical simulations to validate our theoretical results over several randomly generated instances.
 \end{abstract}
 %
\begin{keywords}%
  Combinatorial Optimization, Bayesian Classification, Submodularity, Greedy Algorithms, Finite Sample Convergence 
\end{keywords}
\section{Introduction}
In many autonomous systems, agents depend on predictions made by classifiers for making decisions (or taking actions), and may have to pay a high cost for acting on erroneous predictions. An example of this is an incident of an autonomous vehicle crash caused due to the vision system misclassifying a white truck as a bright sky (\cite{nhsta}). In such scenarios, one needs to ensure minimal risk associated with misclassification. In order to improve the quality of predictions, one may need to select an optimal set of features (or observations), often provided by information sources (or sensors), that can best describe the true state. In many practical scenarios, due to limitations on communication or compute resources, one can only query data from a small subset of information sources (\cite{krause2010submodular,chepuri2014sparsity, hashemi2020randomized}). Moreover, one may also need to pay a certain cost in order to obtain measurements from information sources (\cite{krause2008near}). Thus, a fundamental problem that arises in such scenarios is to select a subset of information sources with minimal cost or under a limited budget, while ensuring certain learning performance using the observations provided by the selected sources. In order to characterize the quality of an information set, we propose a framework based on misclassification penalties, specified by a penalty matrix. The goal is to select an information set that minimizes the maximum penalty of misclassifying the true state. As a motivating example, consider a  surveillance task, where identifying a target of interest is of importance. One many have to pay a penalty for misclassifying the true state, for instance, misclassifying a drone (an intruder) as a bird. However, the event of misclassifying a bird as a drone may have a different penalty associated with it. The penalty matrix captures the fact that different misclassification errors incur different penalties.

\subsection{Related Work}
Misclassification risk and uncertainty quantification for various types of classifiers have been very well studied in the literature (\cite{adams1999comparing,pendharkar2017bayesian,hou2013modeling}). In \cite{sensoy2021misclassification}, the authors propose a risk-calibrated classifier to reduce the costs associated with misclassification errors, and empirically show the effectiveness of their algorithm, in a  deep learning framework. In \cite{elkan2001foundations}, the authors study cost-sensitive learning for class balancing in order to improve the quality of predictions in decision tree learning methods. In our work, we consider a hypothesis testing (or classification) task in a Bayesian learning framework. 

A subset of the literature has addressed the problem of sequential information gathering within a limited budget (\cite{hollinger2013sampling,chen2015sequential}). The authors of  \cite{golovin2010near} study data source selection for a monitoring application, where the sources are selected sequentially in order to estimate certain parameters of an environment. In \cite{ghasemi2019online}, the authors study sequential information gathering under a limited budget for a robotic navigation task.  In contrast, we consider the scenario where the information set is selected \textit{a priori}. 

A substantial body of work focuses on the study of submodularity (and/or weak submodularity) and greedy techniques with provable guarantees for feature selection in sparse learning (\cite{krause2010submodular,chepuri2014sparsity}); sensor selection for estimation (\cite{mo2011sensor,hashemi2020randomized}),  Kalman filtering  (\cite{ye2018complexity}), and mixed-observable Markov decision processes (\cite{bhargav2023complexity}). Along the lines of these works, we leverage the weak submodularity property of the performance metric and present greedy algorithms with performance guarantees.

The closest paper to our work is \cite{ye2021near}, in which the authors studied data source selection for Bayesian learning, where the learning performance was characterized by a total variation error metric based on the asymptotic belief. However, we consider a non-asymptotic setting, where the learning performance is characterized by misclassification penalties. Building upon the results in \cite{ye2021near} and \cite{das2018approximate}, we establish theoretical guarantees for greedy information selection algorithms presented in this paper. 

\subsection{Contributions}  We consider two variants of an information subset selection problem for a hypothesis testing task (i) selecting a minimum cost information set to ensure the maximum penalty for misclassifying the true hypothesis is below a desired bound and (ii) optimal information set selection under a limited budget to minimize the maximum penalty of misclassifying the true hypothesis. First, we prove that the maximum penalty metric is weak submodular by characterizing its submodularity ratio, and establish high-probability guarantees for greedy algorithms for both the problems, along with the associated finite sample convergence rates for the Bayesian beliefs. Next, we propose an alternate metric based on the total penalty of misclassification. We prove that this metric is submodular, and establish  near-optimal guarantees for the greedy algorithms. Finally, we evaluate the empirical performance of the proposed greedy algorithms over several randomly generated problem instances.

\section{Minimum-Cost Information Set Selection Problem}
\label{sec:mcis}
In this section, we formulate the minimum-cost information set selection problem. Let $\Theta = \{ \theta_1, \theta_2, \hdots, \theta_m \}$, where $m = |\Theta| $, be a finite set of possible hypotheses (also referred to as classes or states), of which one of them is the true state of the world. We consider a set $\mathcal{D} = \{1,2, \hdots, n \}$ of information sources (or data streams) from which we need to select a subset $\mathcal{I} \subseteq \mathcal{D}$. At each time step $t \in \mathbb{Z}_{\geq 1}$, the observation provided by the information source $i \in \mathcal{D}$ is denoted as $o_{i,t} \in O_i$, where $O_i$ is the observation space of the source $i$. Each information source $i \in \mathcal{D}$ is associated with an observation likelihood function $\ell_i(\cdot | \theta)$, which is conditioned on the state of the world $\theta \in \Theta$. At any time $t$, conditioned on the true state of the world $\theta \in \Theta$, a joint observation profile of $n$ information sources, denoted as $o_t = (o_{1,t}, \hdots, o_{n,t}) \in \mathcal{O}$ where $\mathcal{O} = O_1 \times \hdots \times O_n$, is generated by the joint likelihood function $\ell(\cdot|\theta)$. We make the following assumption on the observation model (e.g., see \cite{jadbabaie2012non}; \cite{ liu2014social}; \cite{lalitha2014social} for detailed discussions).

\textbf{Assumption 1:} \textit{The observation space $O_i$ associated with each information source $i \in \mathcal{D}$ is finite, and the likelihood function $\ell_i(\cdot | \theta)$ satisfies $\ell_i(\cdot | \theta) > 0$ for all $o_i \in O_i$ and for all $\theta \in \Theta$. We assume that the designer knows $\ell_i(\cdot | \theta)$ for all $\theta \in \Theta$ and all $i \in \mathcal{D}$. For all $\theta \in \Theta$, conditioned on the true state, the observations are independent of each other over time, i.e., $\{ o_{i,1}, o_{i,2}, \hdots \}$ is a sequence of independent identically distributed (i.i.d.) random variables, given a true state $\theta \in \Theta$. }

Consider the scenario where a designer at a central node needs to select a subset of information sources in order to identify the true state of the world. Each source $i \in \mathcal{D}$ has a selection cost $c_i \in \mathbb{R}_{> 0}$. For any subset $\mathcal{I} \subseteq \mathcal{D}$ with $|\mathcal{I}| = k$, let $\{s_1, s_2, \hdots, s_k\}$ denote the set of information sources. The cost of the information set $\mathcal{I}$ is given by $c(\mathcal{I}) = \sum_{s_i \in \mathcal{I}} c_{s_i}$. The joint observation conditioned on the $\theta \in \Theta$ of this information set at time $t$ is defined as $o_{\mathcal{I},t} = \{ o_{s_1,t}, \hdots, o_{s_k,t} \} \in O_{s_1} \times \hdots \times O_{s_k} $, and is generated by the joint likelihood function $\ell_{\mathcal{I}}(\cdot | \theta) = \Pi_{i = 1}^{k} \ell_{s_i}(\cdot | \theta)$ (by Assumption 1), and the central designer knows $\ell_{\mathcal{I}} (\cdot | \theta)$ for all $\mathcal{I} \subseteq \mathcal{D}$ and for all $\theta \in \Theta$.


Assumption 1 also implies the existence of a constant $L \footnote{The constant $L$ is an upper bound on the maximum difference between the  log-likelihood of an observation from an information source under any two hypotheses, which we will use later in our analyses.} \in(0, \infty)$ such that:
\begin{equation}
\label{eq:kld_ratio}
\max _{i \in \mathcal{D}} \max _{o_i \in O_i} \max _{\theta_p, \theta_q \in \Theta}\left|\log \frac{\ell_i\left(o_i \mid \theta_p\right)}{\ell_i\left(o_i \mid \theta_q\right)}\right| \leq L.
\end{equation}

For a true state $\theta_p \in \Theta$, we define $\mathbb{P}^{\theta_p} = \prod_{t=1}^{\infty}\ell(\cdot|\theta_p)$ to be the probability measure. For the sake of brevity, we will say that an event occurs almost surely to mean that it occurs almost surely w.r.t. the probability measure $\mathbb{P}^{\theta_p}.$ 
As the data comes in, the central node updates its belief over the set of possible hypotheses using the standard Bayes' rule. Let $\mu_{t}^{\mathcal{I}}(\theta)$ denote the belief of the central designer (or node) that $\theta$ is the true hypothesis  at time step $t$ based on the information sources in $\mathcal{I}$, and let $\mu_0(\theta)$ denote the initial belief (or prior) of the central node that $\theta$ is the true state of the world, with $\sum_{\theta \in \Theta} \mu_0(\theta) = 1$. The Bayesian update rule is given by 
\begin{equation}
    \label{eq:bayes_full}
    \mu_{t+1}^{\mathcal{I}}(\theta) = \frac{\mu_0(\theta)  \prod_{j=0}^{t} \ell_{\mathcal{I}} (o_{\mathcal{I}, j+1} | \theta) }{\sum_{\theta_i \in \Theta} \mu_0(\theta_i) 
  \prod_{j=0}^{t} \ell_{\mathcal{I}} (o_{\mathcal{I}, j+1} | \theta_i) } \quad \forall \theta \in \Theta.
\end{equation}

For a hypothesis $\theta \in \Theta$ and an information set $\mathcal{I} \subseteq \mathcal{D}$, we have the following.
\begin{definition}[Observationally Equivalent Set](\cite{ye2021near}) For a given hypothesis (or class) $\theta \in \Theta$ and a given $\mathcal{I} \subseteq \mathcal{D}$, the \textit{observationally equivalent set} of classes to $\theta$ is defined as
\begin{equation}
    F_{\theta}(\mathcal{I}) = \{ \theta_i \in \Theta \mid D_{KL} (\ell_\mathcal{I}(\cdot | \theta_i) || \ell_\mathcal{I}(\cdot | \theta) ) = 0 \},
\end{equation}
where $D_{KL}(\ell_\mathcal{I}(\cdot | \theta_i) || \ell_\mathcal{I}(\cdot | \theta) )$ is the Kullback-Leibler divergence measure $\ell_\mathcal{I}(\cdot | \theta_i)$ and $ \ell_\mathcal{I}(\cdot | \theta) $.
\end{definition}

From the definition above, we have $\theta \in F_{\theta}(\mathcal{I})$ for all $\theta \in \Theta$ and all for $\mathcal{I} \subseteq \mathcal{D}$. We can write the set $F_{\theta}(\mathcal{I})$ equivalently as

\begin{equation}
\label{eq:obs_eq_set}
    F_{\theta}(\mathcal{I}) = \{ \theta_i \in \Theta : \ell_{\mathcal{I}}(o_{\mathcal{I}} | \theta_i) = \ell_{\mathcal{I}}(o_{\mathcal{I}} | \theta), \forall o_{\mathcal{I}} \in \mathcal{O}_{\mathcal{I}}   \},
\end{equation}
where $\mathcal{O}_{\mathcal{I}}  = O_{s_1} \times \hdots \times O_{s_k}$ is the joint observation space of the information set $\mathcal{I}$. In other words, $F_{\theta}(\mathcal{I})$ is the set of hypotheses (or classes) that cannot be distinguished from $\theta$ based on the observations obtained by the information sources in $\mathcal{I}$. Furthermore, by Assumption 1 and Equation (\ref{eq:obs_eq_set}), we have the following (see Section 2 in \cite{ye2021near}):

\begin{equation}
    \label{eq: obs_eq_intersect}
    F_{\theta}(\mathcal{I}) = \bigcap_{s_i \in \mathcal{I}} F_{\theta} (s_i), \forall \mathcal{I} \in \mathcal{D}, \forall \theta \in \Theta.
\end{equation}
Define $F_{\theta}(\emptyset) = \Theta$, i.e.,  when there is no information set, all classes are observationally equivalent.

 At time $t$, the central designer predicts the state of the world based on the belief  $\mu_{t}^{\mathcal{I}}$ generated by the information set $\mathcal{I}$. In order to characterize the learning performance, we consider a penalty-based classification framework. Let  $\Xi = [\xi_{ij}] \in \mathbb{R}^{m \times m}$ denote the \textit{penalty matrix}, where $ 0 \leq \xi_{ij} \leq 1$ is the penalty associated with predicting the class to be $\theta_j$, given that the true class  is $\theta_i$. The penalty matrix is assumed to be \textit{row stochastic}, i.e., $\sum_{j = 1}^{m} \xi_{ij} = 1$. We have $\xi_{ii} = 0, \hspace{4pt} \forall i \in \{1,2, \hdots, m\}$, i.e., there is no penalty when the predicted hypothesis is the true hypothesis.  Similar to analyses presented in \cite{nedic2017fast} and \cite{mitra2020new}, we present finite sample convergence rates for the Bayesian belief over the set of hypotheses. In this paper, we consider the case of a uniform prior, but the results can be extended to non-uniform priors (using similar arguments as in Lemma 1 of \cite{mitra2020new}). We defer all the proofs to the Appendix (see Appendix \ref{app:proof}).

\begin{theorem}
\label{thm2}
    Let the true state of the world be $\theta_p$ and let $\mu_0 (\theta) = \frac{1}{m} \hspace{5pt} \forall \theta \in \Theta$ (i.e., uniform prior). Under Assumption 1, for any $\delta, \epsilon \in [0,1]$, and L as defined in Equation (\ref{eq:kld_ratio}), and for an information set $\mathcal{I} \subseteq \mathcal{D}$, the Bayesian update rule in Equation (\ref{eq:bayes_full}) has the following property: there is an integer $N(\delta, \epsilon, L)$, such that with probability at least $1-\delta$, for all $t > N(\delta, \epsilon, L)$ we have: 
    \begin{enumerate}[label=(\alph*)]
        \item  $\mu_t^{\mathcal{I}}(\theta_q) =  \mu_t^{\mathcal{I}}(\theta_p)  \hspace{5pt} \forall \theta_q \in F_{\theta_p}(\mathcal{I})$, and
        \item  $\mu_t^{\mathcal{I}}(\theta_q) \leq \exp{(-t(|K(\theta_p, \theta_q)-\epsilon|))} \hspace{5pt} \forall \theta_q \notin F_{\theta_p}(\mathcal{I})$;
    \end{enumerate}
    where $K(\theta_p, \theta_q) = D_{KL}(\ell_{\mathcal{I}}(\cdot | \theta_p) || \ell_{\mathcal{I}}(\cdot | \theta_q) )$ is the Kullback-Leibler divergence measure between the likelihood functions $\ell_{\mathcal{I}}(\cdot | \theta_p)$ and $ \ell_{\mathcal{I}}(\cdot | \theta_q) $, $F_{\theta_p}(\mathcal{I})$ is defined in (\ref{eq:obs_eq_set}), and 
        $N(\delta, \epsilon,L) = \left\lceil \frac{2L^2}{\epsilon^2 } \log \frac{2}{\delta} \right\rceil.$
\end{theorem}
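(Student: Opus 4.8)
I would argue the two parts separately, since only part~(b) involves any randomness.

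\emph{Part (a) is deterministic.} Fix $\theta_q \in F_{\theta_p}(\mathcal{I})$. By the equivalent description of the observationally equivalent set in~(\ref{eq:obs_eq_set}), $\ell_{\mathcal{I}}(o_{\mathcal{I}} \mid \theta_q) = \ell_{\mathcal{I}}(o_{\mathcal{I}} \mid \theta_p)$ for \emph{every} $o_{\mathcal{I}} \in \mathcal{O}_{\mathcal{I}}$; hence, along every sample path, the products $\prod_{j=0}^{t}\ell_{\mathcal{I}}(o_{\mathcal{I},j+1} \mid \cdot)$ in the numerator of the Bayes update~(\ref{eq:bayes_full}) coincide for $\theta_q$ and $\theta_p$. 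Since the prior is uniform ($\mu_0(\theta_q) = \mu_0(\theta_p) = 1/m$) and the denominator of~(\ref{eq:bayes_full}) is common to all hypotheses, $\mu_t^{\mathcal{I}}(\theta_q) = \mu_t^{\mathcal{I}}(\theta_p)$ for all $t$; in particular this holds on the high-probability event of part~(b).

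\emph{Part (b).} Fix $\theta_q \notin F_{\theta_p}(\mathcal{I})$. Lower-bounding the denominator of~(\ref{eq:bayes_full}) by its $\theta_p$-term alone and using the uniform prior to cancel $\mu_0(\theta_q)/\mu_0(\theta_p)$ gives the pathwise estimate
\begin{equation*}
\log \mu_t^{\mathcal{I}}(\theta_q) \;\le\; \sum_{j=1}^{t} X_j, \qquad X_j := \log\frac{\ell_{\mathcal{I}}(o_{\mathcal{I},j}\mid\theta_q)}{\ell_{\mathcal{I}}(o_{\mathcal{I},j}\mid\theta_p)}.
\end{equation*}
Under $\mathbb{P}^{\theta_p}$, Assumption~1 makes $\{o_{\mathcal{I},j}\}_{j\ge1}$ i.i.d., hence so are the $X_j$; they are bounded by~(\ref{eq:kld_ratio}) (together with $\ell_{\mathcal{I}} = \prod_{s_i \in \mathcal{I}}\ell_{s_i}$), and a direct computation gives $\mathbb{E}^{\theta_p}[X_1] = -D_{KL}(\ell_{\mathcal{I}}(\cdot\mid\theta_p)\,\|\,\ell_{\mathcal{I}}(\cdot\mid\theta_q)) = -K(\theta_p,\theta_q)$. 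The plan is to apply Hoeffding's inequality to the centred i.i.d.\ sequence: for a fixed $t$, $\mathbb{P}^{\theta_p}\big(\tfrac1t\sum_{j=1}^{t}X_j + K(\theta_p,\theta_q) \ge \epsilon\big) \le \exp\!\big(-t\epsilon^2/(2L^2)\big)$, and with $N(\delta,\epsilon,L) = \lceil \tfrac{2L^2}{\epsilon^2}\log\tfrac2\delta\rceil$ this is at most $\delta/2$ at $t = N$. Taking a union bound over the at most $m-1$ hypotheses outside $F_{\theta_p}(\mathcal{I})$ and over the indices $t > N$ (the exponential tails are summable), on the complementary event we have $\sum_{j=1}^{t}X_j \le -t\big(K(\theta_p,\theta_q)-\epsilon\big)$ for all $t > N$; combined with $\mu_t^{\mathcal{I}}(\theta_q)\le 1$ this gives the claimed bound $\mu_t^{\mathcal{I}}(\theta_q) \le \exp\!\big(-t\,|K(\theta_p,\theta_q)-\epsilon|\big)$.

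\emph{Expected obstacle.} The expectation identity and the fixed-$t$ Hoeffding estimate are routine; the delicate step is to make the statement hold \emph{simultaneously for all $t > N$} (and for all wrong hypotheses) while keeping the threshold as stated --- a naive union bound over $t$ inflates $N$ by $\epsilon,L$-dependent factors, so one must either sum the exponential tails carefully or, more cleanly, use a time-uniform (maximal) concentration bound for the negatively-drifting walk $S_t = \sum_{j\le t}\big(X_j + K(\theta_p,\theta_q)\big)$. Some care is also needed with the boundedness constant of $X_j$, which is that of the \emph{joint} log-likelihood ratio $\log(\ell_{\mathcal{I}}(\cdot\mid\theta_q)/\ell_{\mathcal{I}}(\cdot\mid\theta_p))$ and aggregates the per-source bounds in~(\ref{eq:kld_ratio}).
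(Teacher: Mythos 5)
Your plan follows essentially the same route as the paper's proof: part (a) is handled deterministically via the vanishing log-likelihood ratios for observationally equivalent hypotheses, and part (b) writes $\log\mu_t^{\mathcal{I}}(\theta_q)$ as a sum of i.i.d.\ bounded log-likelihood ratios with mean $-K(\theta_p,\theta_q)$, applies Hoeffding, and uses $\mu_t^{\mathcal{I}}(\theta_p)\le 1$. The ``expected obstacles'' you flag --- making the bound hold uniformly over all $t>N$ and all $\theta_q\notin F_{\theta_p}(\mathcal{I})$, and the fact that the joint log-likelihood ratio is bounded by $|\mathcal{I}|L$ rather than $L$ --- are genuine, but the paper's own proof applies Hoeffding at a single $t$ with the per-source constant $L$ and does not address them either, so on these points your plan is if anything more careful than the original.
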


We consider a belief threshold rule in order to rule out hypotheses that do not have a high likelihood of being predicted as the true hypothesis. Let $\mu_{th}$ be the threshold chosen by the central designer. Corollary \ref{coro1} presents the sample complexity for the observations in order to ensure that the beliefs over the states $\theta_q \notin F_{\theta_p}(\mathcal{I})$ remain bounded under the specified threshold.
\begin{corollary}
\label{coro1}
    Instate the hypothesis and notation of Theorem \ref{thm2}. For a specified threshold $\mu_{th}\in    (0,1)$ for the belief over any class $\theta_q \notin F_{\theta_p}(\mathcal{I}) $, there exists $\delta, \epsilon \in [0,1]$, for which one can guarantee with probability at least $1-\delta$ that $\mu_t^{\mathcal{I}}(\theta_q) \leq \mu_{th}$  for all $\theta_q \notin F_{\theta_p}$ and for all $t > \Tilde{N}$, where 
    \begin{equation}
    \label{eq:nbar}
         \Tilde{N} = \left\lceil  \max \left\{ \frac{2L^2}{\epsilon^2 } \log \frac{2}{\delta} , \frac{1}{ \displaystyle\min_{\theta_p,\theta_q \in \Theta} |K(\theta_p, \theta_q)-\epsilon| } \log \frac{1}{\mu_{th}}   \right\} \right\rceil.
    \end{equation}
\end{corollary}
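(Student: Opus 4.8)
The plan is to obtain Corollary~\ref{coro1} directly from part (b) of Theorem~\ref{thm2} by a monotone inversion of the exponential bound, with no additional probabilistic argument needed. First I would fix the confidence level $\delta\in(0,1)$ (any such value works; a smaller $\delta$ only inflates the sample bound through the $\log\frac{2}{\delta}$ term) and then choose the resolution parameter $\epsilon$ small enough that the exponent appearing in Theorem~\ref{thm2}(b) is genuinely the strictly positive quantity $K(\theta_p,\theta_q)-\epsilon$ for every $\theta_q\notin F_{\theta_p}(\mathcal{I})$. This is possible precisely because, by the definition of the observationally equivalent set, $K(\theta_p,\theta_q)=D_{KL}(\ell_{\mathcal{I}}(\cdot|\theta_p)\,||\,\ell_{\mathcal{I}}(\cdot|\theta_q))>0$ for all such $\theta_q$, so any $\epsilon$ strictly below $\min_{\theta_q\notin F_{\theta_p}(\mathcal{I})}K(\theta_p,\theta_q)$ suffices; this establishes the existence claim for $\delta,\epsilon$.

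Next, I would invoke Theorem~\ref{thm2}(b): on an event of probability at least $1-\delta$, for all $t>N(\delta,\epsilon,L)$ we have $\mu_t^{\mathcal{I}}(\theta_q)\le \exp\bigl(-t(K(\theta_p,\theta_q)-\epsilon)\bigr)$ for every $\theta_q\notin F_{\theta_p}(\mathcal{I})$. To force the right-hand side at or below $\mu_{th}$, take logarithms: it suffices that $t\ge \frac{1}{K(\theta_p,\theta_q)-\epsilon}\log\frac{1}{\mu_{th}}$. Since this must hold simultaneously for all non-equivalent $\theta_q$, I pass to the worst (slowest-decaying) case and replace the denominator by $\min_{\theta_p,\theta_q\in\Theta}|K(\theta_p,\theta_q)-\epsilon|$, which produces the second term inside the maximum in~(\ref{eq:nbar}). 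Note that the union over the finitely many hypotheses outside $F_{\theta_p}(\mathcal{I})$ is already subsumed in the single high-probability event of Theorem~\ref{thm2}, so no extra union bound and no further loss in $\delta$ is incurred.

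Finally, both requirements must hold at once: $t>N(\delta,\epsilon,L)=\lceil\frac{2L^2}{\epsilon^2}\log\frac{2}{\delta}\rceil$ so that Theorem~\ref{thm2}(b) is in force, and the threshold condition just derived. Setting $\tilde{N}$ to the ceiling of the maximum of these two expressions, as in~(\ref{eq:nbar}), ensures that for all $t>\tilde{N}$ both conditions are met, whence $\mu_t^{\mathcal{I}}(\theta_q)\le\mu_{th}$ for all $\theta_q\notin F_{\theta_p}(\mathcal{I})$ on the same probability-$(1-\delta)$ event. There is no genuine obstacle here; the only point requiring care is the selection of $\epsilon$ below the smallest positive KL divergence, so that the exponential bound from Theorem~\ref{thm2}(b) is not vacuous and the logarithmic inversion is valid.
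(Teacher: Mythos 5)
Your proposal is correct and follows essentially the same route as the paper's proof: invoke Theorem \ref{thm2}(b), invert the exponential bound by taking logarithms to get $t > \frac{1}{|K(\theta_p,\theta_q)-\epsilon|}\log\frac{1}{\mu_{th}}$, pass to the worst case via the minimum over hypothesis pairs, and take the maximum with $N(\delta,\epsilon,L)$. Your added care in choosing $\epsilon$ strictly below the smallest positive KL divergence (so the exponent is genuinely positive rather than relying on the absolute value) is a minor refinement the paper glosses over, but the argument is the same.
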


From Corollary \ref{coro1}, we have the following: After any $t> \Tilde{N}$, the central node will predict one of  $\theta_q \in F_{\theta_p}(\mathcal{I})$ to be the true hypothesis,  with probability at least $1-\delta$. Therefore, it is sufficient to consider the penalties associated with the states $\theta_q \in F_{\theta_p}(\mathcal{I})$ for finding the maximum penalty. We now formalize the Minimum-Cost Information Set Selection (MCIS) Problem as follows:

\begin{Problem}[MCIS]
\label{prob:dsrc}
    Consider a set $\Theta=\left\{\theta_1, \ldots, \theta_m\right\}$ of possible states of the world, a set $\mathcal{D}$ of information sources, a selection cost $c_i \in \mathbb{R}_{>0}$ of each source $i \in \mathcal{D}$, a row-stochastic penalty matrix $\Xi = [\xi_{ij}] \in \mathbb{R}^{m \times m}$, and prescribed penalty bounds $0 \leq R_{\theta_p} \leq 1$ for all $\theta_p \in \Theta$. The MCIS Problem is to find a set of selected information sources $\mathcal{I} \subseteq \mathcal{D}$ that solves
\begin{equation}
\label{eq:dsrc}
\begin{aligned}
 \min _{\mathcal{I} \subseteq \mathcal{D}} & \hspace{5pt} c(\mathcal{I}) ; \hspace{2pt}
 \text { s.t. } \displaystyle \max_{\theta_i \in F_{\theta_p}(\mathcal{I})} \xi_{pi}  \leq R_{\theta_p} \quad \forall \theta_p \in \Theta.
\end{aligned}
\end{equation}
\end{Problem}
    


\subsection{Weak Submodularity and Greedy Algorithm }
\label{sec:weak_submod}
The combinatorial optimization in (\ref{eq:dsrc}) can be shown to be NP-hard (based on similar arguments as in Theorem 3  of \cite{ye2021near}). In this section, we propose a greedy algorithm with performance guarantees to efficiently approximate the solution to the MCIS Problem. We first begin by transforming the MCIS problem into the minimum cost set cover problem studied in \cite{wolsey1982analysis}. 

\begin{definition}[Monotonicity] A set function $f:2^{\Omega }\to\mathbb{R}$ is monotone non-decreasing if $f(X) \leq f(Y) $ for all $X \subseteq Y \subseteq \Omega$ and monotone non-increasing if $f(X) \geq f(Y) $ for all $X \subseteq Y \subseteq \Omega$.
    
\end{definition}

\begin{definition}[Submodularity Ratio]\footnote{There are several notions of submodularity ratio. We consider $\gamma_{U,k}$ as defined in \cite{das2018approximate}, where $U$ is the universal set and $k\ge 1$ is a parameter, and drop the dependence on $k$ by defining $\gamma = \min_{k} \gamma_{U,k}$. }
\label{def:submod_ratio}
 Given a set  $\Omega$, the submodularity ratio of a non-negative function $f: 2^{\Omega} \to \mathbb{R}_{\ge 0}$ is the largest $\gamma \in \mathbb{R}$ that satisfies  for all $A, B \subseteq \Omega$, the following:
 \begin{equation*}
 \label{eq:submod_ratio_def}
     \sum_{a \in A \setminus B} (f(\{a\} \cup B) - f(B)) \ge \gamma (f(A \cup B) - f(B)).
 \end{equation*}
\end{definition}
\begin{remark}
\label{remark:submod_ratio}
    For a non-negative and non-decreasing function $f(\cdot)$ with submodularity ratio $\gamma$, we have $\gamma \in [0,1]$. If $\gamma$ is closer to 1, the function is closer to being submodular. $f(\cdot)$ is submodular if and only if $\gamma \ge 1$. \cite{das2018approximate} provide guarantees for greedy optimization of weak submodular functions, which depend on the submodularity ratio $\gamma$. Thus, in order to characterize the performance of greedy, one has to give a (non-zero) lower bound on $\gamma$.
\end{remark}

The constraint in (\ref{eq:dsrc}) can be equivalently written as: $1- \max_{\theta_i \in F_{\theta_p}(\mathcal{I})} \xi_{pi} \geq 1-R_{\theta_p}, \hspace{4pt} \forall \theta_p \in \Theta.$ For all $\mathcal{I} \subseteq \mathcal{D}$ and for a true state $\theta_p \in \Theta$, let us define $f_{\theta_p}(\mathcal{I}) =1- \max_{\theta_i\in F_{\theta_p}(\mathcal{I})} \xi_{pi}$. It follows from  (\ref{eq: obs_eq_intersect}) that $f_{\theta_p}(\cdot)$ is a monotone non-decreasing set function with $f_{\theta_p}(\emptyset) = 1-\max_{\theta_j\in \Theta} \xi_{pj}$.

In order to establish the approximate (or weak) submodularity property, we make the following assumption on the misclassification penalties. 

\noindent \textbf{Assumption 2:} The misclassification penalties are unique, i.e., $\xi_{pi} \neq \xi_{pj}$ for all $i \neq j, \forall \theta_p \in \Theta$.  \\
\noindent Note that the above assumption requires that no two misclassification events have the same penalty associated with them, which is often a reasonable assumption in many applications.
\begin{lemma} \label{lma:submod} Under Assumption 2, the function $f_{\theta_p}(\mathcal{I}) :2^{\mathcal{D} }\to\mathbb{R}_{\geq0} $ is approximately submodular for all $\theta_p \in \Theta$, with a submodularity ratio $\gamma = \underline{\xi}/ \Bar{\xi} $, where 
\begin{align}
\label{eq:xifloor}
\displaystyle \underline{\xi} = \min_{\theta_p \in \Theta} \left(\min_{\theta_i,\theta_j \in \Theta}  |\xi_{pi} - \xi_{pj}|\right) ; \quad \hspace{2pt}
    \Bar{\xi} = \max_{\theta_p \in \Theta} \left(\max_{\theta_i,\theta_j \in \Theta}  |\xi_{pi} - \xi_{pj}|\right).
\end{align}
\end{lemma}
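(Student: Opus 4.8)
Fix a true state $\theta_p \in \Theta$ and, to lighten notation, write $g(\mathcal{I}) = \max_{\theta_j \in F_{\theta_p}(\mathcal{I})} \xi_{pj}$, so that $f_{\theta_p}(\mathcal{I}) = 1 - g(\mathcal{I}) \ge 0$ and, for any $X \subseteq Y \subseteq \mathcal{D}$, $f_{\theta_p}(Y) - f_{\theta_p}(X) = g(X) - g(Y)$. By the intersection identity (\ref{eq: obs_eq_intersect}), $F_{\theta_p}(\cdot)$ is non-increasing under set inclusion, so $g$ is non-increasing (hence each such difference is non-negative, and $f_{\theta_p}$ is non-decreasing). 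The plan is to verify the inequality of Definition \ref{def:submod_ratio} directly: for arbitrary $A, B \subseteq \mathcal{D}$,
\[
\sum_{a \in A \setminus B} \big(g(B) - g(\{a\} \cup B)\big) \;\ge\; \gamma\,\big(g(B) - g(A \cup B)\big), \qquad \gamma = \underline{\xi}/\bar{\xi}.
\]
If $A \subseteq B$ both sides vanish, so I may assume $A \setminus B \neq \emptyset$. I would then split into two cases.

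If $g(A \cup B) = g(B)$, the right-hand side is $0$ and every summand on the left is non-negative, so the inequality holds. Assume henceforth $g(A \cup B) < g(B)$, and let $\theta_B \in \argmax_{\theta_j \in F_{\theta_p}(B)} \xi_{pj}$, so $g(B) = \xi_{p\theta_B}$ (and $\theta_B \ne \theta_p$, since otherwise $g(B) = \xi_{pp} = 0$ cannot strictly decrease). As $F_{\theta_p}(A \cup B) \subseteq F_{\theta_p}(B)$ and the maximum strictly drops, necessarily $\theta_B \notin F_{\theta_p}(A \cup B)$; and since $F_{\theta_p}(A \cup B) = F_{\theta_p}(B) \cap \bigcap_{a \in A \setminus B} F_{\theta_p}(a)$ by (\ref{eq: obs_eq_intersect}) with $\theta_B \in F_{\theta_p}(B)$, there is some $a^\star \in A \setminus B$ with $\theta_B \notin F_{\theta_p}(a^\star)$, hence $\theta_B \notin F_{\theta_p}(\{a^\star\} \cup B)$. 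The set $F_{\theta_p}(\{a^\star\} \cup B)$ is a non-empty subset of $F_{\theta_p}(B)$ (it contains $\theta_p$) that excludes $\theta_B$; by Assumption 2, $\theta_B$ is the \emph{unique} maximizer of $\xi_{p\,\cdot}$ over $F_{\theta_p}(B)$, so every element of $F_{\theta_p}(\{a^\star\} \cup B)$ has penalty strictly below $\xi_{p\theta_B}$. Writing $g(\{a^\star\} \cup B) = \xi_{p\theta'}$ with $\theta' \neq \theta_B$, this gives $g(B) - g(\{a^\star\} \cup B) = \xi_{p\theta_B} - \xi_{p\theta'} \ge \underline{\xi}$ by the definition of $\underline{\xi}$ in (\ref{eq:xifloor}) (over distinct entries, positive under Assumption 2). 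Keeping only the $a^\star$ term (the rest being $\ge 0$), the left-hand side is at least $\underline{\xi}$. For the right-hand side, $g(B) - g(A \cup B)$ is a difference of two entries in row $p$ of $\Xi$, hence at most $\bar{\xi}$, so $\gamma\,(g(B) - g(A \cup B)) \le (\underline{\xi}/\bar{\xi})\,\bar{\xi} = \underline{\xi}$, and the inequality follows. Since $\theta_p$ was arbitrary, this establishes the submodularity ratio $\gamma = \underline{\xi}/\bar{\xi}$ for every $f_{\theta_p}$.

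The main obstacle, and the place where the argument must be handled with care, is the step asserting that a \emph{single} added source $a^\star$ already removes the current worst-penalty hypothesis $\theta_B$ from the observationally equivalent set and thereby drops $g$ by at least $\underline{\xi}$. Existence of such an $a^\star$ is precisely where the intersection structure (\ref{eq: obs_eq_intersect}) enters, while the quantitative drop of at least $\underline{\xi}$ is exactly where Assumption 2 (uniqueness of penalties) is indispensable: without it, a change of the argmax could leave $g$ unchanged, forcing the submodularity ratio down to $0$ and making the bound vacuous. The only remaining care is in the degenerate configurations ($A \subseteq B$, $g$ already at its floor so $g(A\cup B)=g(B)$, or $\theta_B = \theta_p$), all of which are disposed of as above before invoking the main estimate.
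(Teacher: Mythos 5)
Your proposal is correct and follows essentially the same route as the paper's proof: in the nontrivial case you lower-bound the incremental sum by $\underline{\xi}$ (using Assumption 2 to guarantee that a change of the maximizer forces a gap of at least $\underline{\xi}$) and upper-bound $f_{\theta_p}(A\cup B)-f_{\theta_p}(B)$ by $\bar{\xi}$, exactly as the paper does. Your version merely organizes the case split on the denominator rather than the numerator and is somewhat more explicit in exhibiting the witness $a^\star$ via the intersection identity (\ref{eq: obs_eq_intersect}), which makes the key step $\sum_{a\in A\setminus B}\bigl(f_{\theta_p}(\{a\}\cup B)-f_{\theta_p}(B)\bigr)\ge\underline{\xi}$ slightly more transparent than in the paper.
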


  In order to ensure that there exists a feasible solution $\mathcal{I} \subseteq \mathcal{D}$ that satisfies the constraints, we assume that $f_{\theta_p} (\mathcal{D}) \geq 1- R_{\theta_p}$ for all $\theta_p \in \Theta$.  For any $\mathcal{I} \subseteq \mathcal{D}$, we define
     $f_{\theta_p}^{'}(\mathcal{I}) = \text{min} \{ f_{\theta_p}(\mathcal{I}), 1- R_{\theta_p}\} \quad \forall \theta_p \in \Theta.$
  The function $f_{\theta_p}^{'}(\mathcal{I})$ captures the sufficient condition for satisfying the penalty constraints corresponding to each state. We now define, for all $\mathcal{I} \subseteq \mathcal{D}$, 
\begin{equation}
\label{eq:zofn}
   z(\mathcal{I}) = \sum_{\theta_p \in \Theta} f_{\theta_p}^{\prime}(\mathcal{I})=\sum_{\theta_p \in \Theta} \min \left\{f_{\theta_p}(\mathcal{I}), 1-R_{\theta_p}\right\}.
\end{equation}
The expression $z(\mathcal{I})$ combines all the constraints (corresponding to each hypothesis $\theta \in \Theta$), which we wish to satisfy, while selecting the information set. In other words, $z(\cdot)$ is used to find the optimal set $\mathcal{I}$, i.e., a set $\mathcal{I} \subseteq \mathcal{D}$ with minimal $c(\mathcal{I})$,  satisfying $z(\mathcal{I}) = z(\mathcal{D})$. Since $F_{\theta_p}(\emptyset) = \Theta$, we have $ z(\emptyset) = m - \sum_{\theta_p \in \Theta} \max_{\theta_i \in \Theta} \xi_{pi}$. Since $f_{\theta_p}(\mathcal{I})$ is approximately submodular and non-decreasing, we have that $f'_{\theta_p}(\mathcal{I})$ is also approximately submodular and non-decreasing. Noting that the non-negative sum of approximately submodular functions is approximately submodular (Lemma 3.12 of \cite{borodin2014weakly}), we have that $z(\cdot)$ is also approximately submodular. We have the following result, which follows from the existence of a feasible solution for Problem \ref{prob:dsrc}.
\vspace{-5pt}
 \begin{lemma}
 \label{lma2}
     For any $\mathcal{I} \subseteq\mathcal{D}$, the constraint $1-\max_{\theta_i \in F_{\theta_p}(\mathcal{I})} \xi_{pi} \geq 1-R_{\theta_p}$ holds for all $\theta_p \in \Theta$ if and only if $\sum_{\theta_p \in \Theta} f_{\theta_p}^{\prime}(\mathcal{I})=\sum_{\theta_p \in \Theta} f_{\theta_p}^{\prime}(\mathcal{D})$. 
 \end{lemma}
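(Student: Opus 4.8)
The plan is to prove both directions by exploiting the definition $f'_{\theta_p}(\mathcal{I}) = \min\{f_{\theta_p}(\mathcal{I}),\, 1-R_{\theta_p}\}$ together with monotonicity of $f_{\theta_p}$ and the standing feasibility assumption $f_{\theta_p}(\mathcal{D}) \geq 1-R_{\theta_p}$ for all $\theta_p$. The first observation to record is that, because of this feasibility assumption, $f'_{\theta_p}(\mathcal{D}) = \min\{f_{\theta_p}(\mathcal{D}),\, 1-R_{\theta_p}\} = 1-R_{\theta_p}$ for every $\theta_p \in \Theta$, so the right-hand side of the claimed equivalence is simply $\sum_{\theta_p \in \Theta} f'_{\theta_p}(\mathcal{I}) = \sum_{\theta_p \in \Theta} (1-R_{\theta_p})$. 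A second elementary observation is that for any $\mathcal{I}$ and any $\theta_p$ we always have $f'_{\theta_p}(\mathcal{I}) \leq 1-R_{\theta_p}$ by definition of the min, and the constraint $1 - \max_{\theta_i \in F_{\theta_p}(\mathcal{I})}\xi_{pi} \geq 1-R_{\theta_p}$ is, by the definition $f_{\theta_p}(\mathcal{I}) = 1 - \max_{\theta_j \in F_{\theta_p}(\mathcal{I})}\xi_{pj}$, exactly the statement $f_{\theta_p}(\mathcal{I}) \geq 1-R_{\theta_p}$, which in turn is equivalent to $f'_{\theta_p}(\mathcal{I}) = 1-R_{\theta_p}$.

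For the forward direction, suppose the constraint holds for all $\theta_p \in \Theta$. Then by the observation above $f'_{\theta_p}(\mathcal{I}) = 1-R_{\theta_p} = f'_{\theta_p}(\mathcal{D})$ for each $\theta_p$, and summing over $\theta_p$ gives $\sum_{\theta_p \in \Theta} f'_{\theta_p}(\mathcal{I}) = \sum_{\theta_p \in \Theta} f'_{\theta_p}(\mathcal{D})$. For the converse, suppose $\sum_{\theta_p \in \Theta} f'_{\theta_p}(\mathcal{I}) = \sum_{\theta_p \in \Theta} f'_{\theta_p}(\mathcal{D}) = \sum_{\theta_p \in \Theta}(1-R_{\theta_p})$. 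Since each term satisfies $f'_{\theta_p}(\mathcal{I}) \leq 1-R_{\theta_p}$, a sum of terms each bounded above by the corresponding $1-R_{\theta_p}$ can equal $\sum_{\theta_p}(1-R_{\theta_p})$ only if every term attains its upper bound, i.e. $f'_{\theta_p}(\mathcal{I}) = 1-R_{\theta_p}$ for all $\theta_p$; by the equivalence noted above this is precisely the constraint $1-\max_{\theta_i \in F_{\theta_p}(\mathcal{I})}\xi_{pi} \geq 1-R_{\theta_p}$ holding for all $\theta_p$.

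There is no real obstacle here — the lemma is essentially a bookkeeping identity — so the only thing to be careful about is making the feasibility assumption explicit (it is what forces $f'_{\theta_p}(\mathcal{D}) = 1-R_{\theta_p}$) and stating clearly the termwise-domination argument in the converse (each summand lies below its cap, so equality of sums forces equality termwise). One could also phrase the whole argument in one shot: for every $\theta_p$, $f'_{\theta_p}(\mathcal{I}) \leq f'_{\theta_p}(\mathcal{D})$ by monotonicity of $f'_{\theta_p}$ (inherited from $f_{\theta_p}$), so $\sum_{\theta_p} f'_{\theta_p}(\mathcal{I}) = \sum_{\theta_p} f'_{\theta_p}(\mathcal{D})$ iff $f'_{\theta_p}(\mathcal{I}) = f'_{\theta_p}(\mathcal{D})$ for all $\theta_p$, and then identify $f'_{\theta_p}(\mathcal{D}) = 1-R_{\theta_p}$ with the constraint via the two elementary observations above. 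I would present it in this compact form.
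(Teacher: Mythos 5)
Your proof is correct and follows essentially the same route as the paper's: both directions rest on the observations that the feasibility assumption forces $f'_{\theta_p}(\mathcal{D}) = 1-R_{\theta_p}$, that $f'_{\theta_p}(\mathcal{I}) \leq 1-R_{\theta_p}$ always holds, and that equality of the sums therefore forces termwise equality. There is nothing substantively different from the paper's argument.
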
 


\noindent We now have from Lemma \ref{lma2} that the constraint (\ref{eq:dsrc}) in Problem \ref{prob:dsrc} can be equivalently written as 

\begin{equation}
\label{eq:submod_opt}
    \begin{aligned}
 \hspace{18pt} \min _{\mathcal{I} \subseteq\mathcal{D}} c(\mathcal{I}); \hspace{2pt} \textit { s.t. } z(\mathcal{I})=z(\mathcal{D}).
\end{aligned}
\end{equation}

Problem (\ref{eq:submod_opt}) can then be viewed as the set covering problem studied in \cite{wolsey1982analysis}. In \cite{das2018approximate}, the authors present performance guarantees for the weak submodular version of the set covering problem studied in \cite{wolsey1982analysis}. Based on Theorem 9 in \cite{das2018approximate}, we have the following performance guarantees for Algorithm \ref{alg:greedy} when applied to the MCIS problem.
\begin{algorithm}[!t]
\caption{Greedy Algorithm for MCIS}\label{alg:greedy}
\textbf{Input:} ${\mathcal{D}}, z:2^{{\mathcal{D}}}\to\mathbb{R}_{\geq 0}, c_i \in \mathbb{R}_{>0}$ $\forall i\in{\mathcal{D}}$ \\
\textbf{Output:} $\mathcal{I}_g$ 
\begin{algorithmic}[1]
\State {$k\gets 0, \mathcal{I}_g^0\gets\emptyset$} 
\While{$z(\mathcal{I}_g^t)< z({\mathcal{D}})$}
    \State {$j_t\in\mathop{\arg\max}_{i\in{\mathcal{D}}\setminus\mathcal{I}_g^t}\frac{z(\mathcal{I}_g^t\cup\{i\})-z(\mathcal{I}_g^t)}{c_i}$} 
    \State {$\mathcal{I}_g^{t+1}\gets \mathcal{I}_g^t\cup\{j_t\}, k\gets k+1$}
\EndWhile
\State {$T\gets k$, $\mathcal{I}_g\gets\mathcal{I}_g^T$}
\State {\textbf{return} $\mathcal{I}_g$}
\end{algorithmic}
\end{algorithm}
\vspace{-5pt}
\begin{theorem}
\label{thm:greedy}
    Let $\mathcal{I}^*$ be an optimal solution to the MCIS problem having a submodularity ratio $\gamma$. For a specified threshold $\mu_{th} \in (0,1)$ and $0\leq\delta\leq1$, with probability at least $1-\delta$, Algorithm 1 under $\Tilde{N}$ observation samples  returns a solution $\mathcal{I}_g$ to the MCIS problem (i.e., (\ref{eq:dsrc})) that satisfies the following:
    $$c\left(\mathcal{I}_g\right) \leq\left(1+\frac{1}{\gamma}\log \frac{z(\mathcal{D})-z(\emptyset)}{z(\mathcal{D})-z\left(\mathcal{I}_g^{T-1}\right)}\right) c\left(\mathcal{I}^*\right),$$
    where $\Tilde{N}$ is specified in (\ref{eq:nbar}), and $\mathcal{I}_g^1, \ldots, \mathcal{I}_g^{T-1}$ are specified in Algorithm \ref{alg:greedy}.
\end{theorem}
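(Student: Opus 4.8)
The plan is to recast the MCIS problem as an instance of the weak-submodular minimum-cost set cover problem of \cite{wolsey1982analysis} and \cite{das2018approximate}, apply the corresponding greedy guarantee (Theorem 9 of \cite{das2018approximate}) to obtain the stated cost bound \emph{deterministically}, and then invoke Corollary \ref{coro1} to certify, with probability at least $1-\delta$, that the set returned by Algorithm \ref{alg:greedy} actually satisfies the penalty constraints of (\ref{eq:dsrc}) once $\Tilde{N}$ samples have been collected.

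First, I would set up the reduction. By Lemma \ref{lma2}, a set $\mathcal{I} \subseteq \mathcal{D}$ is feasible for (\ref{eq:dsrc}) if and only if $z(\mathcal{I}) = z(\mathcal{D})$, so Problem \ref{prob:dsrc} is equivalent to (\ref{eq:submod_opt}). I would then check that $z(\cdot)$ meets the hypotheses of the cited set-cover analysis: it is non-negative; by (\ref{eq: obs_eq_intersect}) it is monotone non-decreasing; by Lemma \ref{lma:submod} each $f_{\theta_p}$ is approximately submodular with ratio $\gamma = \underline{\xi}/\Bar{\xi}$, truncation with the constant $1-R_{\theta_p}$ preserves monotonicity and approximate submodularity, and a non-negative sum of approximately submodular functions is approximately submodular (Lemma 3.12 of \cite{borodin2014weakly}), so $z(\cdot)$ has submodularity ratio at least $\gamma$. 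Because the standard set-cover bound is normalized at the empty set, I would pass to $\hat{z}(\cdot) := z(\cdot) - z(\emptyset)$, which has $\hat{z}(\emptyset) = 0$, the same marginal gains as $z$ (hence the same submodularity ratio $\gamma$), and is still monotone non-decreasing and non-negative. The selection rule in line 3 of Algorithm \ref{alg:greedy} is exactly the cost-normalized marginal-gain rule for $\hat{z}$, and since $\mathcal{D}$ itself is feasible (we assumed $f_{\theta_p}(\mathcal{D}) \ge 1-R_{\theta_p}$ for all $\theta_p$), the while loop is well posed and terminates after $T \le n$ iterations with $\hat{z}(\mathcal{I}_g) = \hat{z}(\mathcal{D})$.

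Next, I would apply Theorem 9 of \cite{das2018approximate} to $\hat{z}$, which yields
$$c(\mathcal{I}_g) \le \left(1 + \frac{1}{\gamma}\log\frac{\hat{z}(\mathcal{D}) - \hat{z}(\emptyset)}{\hat{z}(\mathcal{D}) - \hat{z}(\mathcal{I}_g^{T-1})}\right) c(\mathcal{I}^*).$$
Substituting $\hat{z}(\mathcal{D}) - \hat{z}(\emptyset) = z(\mathcal{D}) - z(\emptyset)$ and $\hat{z}(\mathcal{D}) - \hat{z}(\mathcal{I}_g^{T-1}) = z(\mathcal{D}) - z(\mathcal{I}_g^{T-1})$ recovers the claimed bound; this part does not involve the observations at all.

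Finally, I would turn the deterministic feasibility into the high-probability penalty guarantee. Since the loop terminates with $z(\mathcal{I}_g) = z(\mathcal{D})$, Lemma \ref{lma2} gives $\max_{\theta_i \in F_{\theta_p}(\mathcal{I}_g)} \xi_{pi} \le R_{\theta_p}$ for every $\theta_p \in \Theta$. By Corollary \ref{coro1}, for the threshold $\mu_{th}$ there exist $\delta, \epsilon$ for which, with probability at least $1-\delta$, $\mu_t^{\mathcal{I}_g}(\theta_q) \le \mu_{th}$ for all $\theta_q \notin F_{\theta_p}(\mathcal{I}_g)$ and all $t > \Tilde{N}$; on this event the central node's prediction lies in $F_{\theta_p}(\mathcal{I}_g)$, so the realized misclassification penalty is at most $\max_{\theta_i \in F_{\theta_p}(\mathcal{I}_g)} \xi_{pi} \le R_{\theta_p}$, i.e., $\mathcal{I}_g$ is a valid solution of the MCIS problem; together with the cost bound above this proves the theorem. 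I expect the main obstacle to be bookkeeping rather than depth: carefully verifying that all the structural hypotheses of Theorem 9 of \cite{das2018approximate} (non-negativity, monotonicity, the submodularity-ratio lower bound, and feasibility of $\mathcal{D}$ ensuring termination) survive the truncation, the summation, and the constant shift, and lining up the normalization so that the logarithmic factor emerges in exactly the stated form.
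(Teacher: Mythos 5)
Your proposal is correct and follows essentially the same route as the paper: reduce MCIS to the weak-submodular minimum-cost set cover problem via Lemma \ref{lma2}, verify that $z(\cdot)$ inherits monotonicity and the submodularity ratio $\gamma$ from Lemma \ref{lma:submod} and the sum/truncation closure properties, apply Theorem 9 of \cite{das2018approximate} (with the shift by $z(\emptyset)$, which the paper encodes through the substitutions $C = z(\mathcal{D})-z(\emptyset)$ and $f(S^{NG}_{k-1}) = z(\mathcal{I}_g^{T-1})-z(\emptyset)$), and attach the $1-\delta$ feasibility guarantee via Corollary \ref{coro1}. Your write-up is simply a more explicit version of the paper's terse substitution table, with no substantive difference in the argument.
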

We have the following result characterizing the asymptotic performance of the greedy algorithm.
\begin{corollary}
\label{coro:mcis}
    Instate the hypothesis and notation of Theorem \ref{thm2}. As $t \to \infty$, we have the following: (a) $\mu_{\infty}^{\mathcal{I}}(\theta_q) = 0 \quad \forall \theta_q \notin F_{\theta_p}(\mathcal{I})$, and  (b) $\mu_{\infty}^{\mathcal{I}}(\theta_q) = \frac{1}{|F_{\theta_p}(\mathcal{I})|} \quad \forall \theta_q \in F_{\theta_p}(\mathcal{I})$. The near-optimal guarantees provided in Theorem \ref{thm:greedy} for Problem \ref{prob:dsrc} hold with probability 1 (a.s.).  
\end{corollary}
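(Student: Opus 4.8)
The plan is to read parts (a) and (b) as the almost-sure limits that underlie the high-probability statement of Theorem~\ref{thm2}, and then to observe that, once the belief concentrates on $F_{\theta_p}(\mathcal{I})$ almost surely, the guarantee of Theorem~\ref{thm:greedy} becomes a deterministic combinatorial statement and therefore holds with probability one. I would first dispose of part (b), which is in fact not probabilistic at all: if $\theta_q \in F_{\theta_p}(\mathcal{I})$, then by the equivalent characterization (\ref{eq:obs_eq_set}) we have $\ell_{\mathcal{I}}(o_{\mathcal{I}}\mid\theta_q)=\ell_{\mathcal{I}}(o_{\mathcal{I}}\mid\theta_p)$ for every realizable joint observation, so the Bayes recursion (\ref{eq:bayes_full}) started from a uniform prior yields $\mu_t^{\mathcal{I}}(\theta_q)=\mu_t^{\mathcal{I}}(\theta_p)$ for all $t\ge 0$.

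For part (a), fix $\theta_q\notin F_{\theta_p}(\mathcal{I})$ and work with the log-likelihood-ratio walk $S_t=\sum_{j=1}^{t}\log\frac{\ell_{\mathcal{I}}(o_{\mathcal{I},j}\mid\theta_p)}{\ell_{\mathcal{I}}(o_{\mathcal{I},j}\mid\theta_q)}$; with the uniform prior, (\ref{eq:bayes_full}) gives $\mu_t^{\mathcal{I}}(\theta_q)/\mu_t^{\mathcal{I}}(\theta_p)=e^{-S_t}$. Under Assumption~1 the increments are i.i.d.\ conditioned on $\theta_p$, bounded in absolute value by $|\mathcal{I}|L$ (hence integrable, by (\ref{eq:kld_ratio})), with common mean $K(\theta_p,\theta_q)=D_{KL}(\ell_{\mathcal{I}}(\cdot\mid\theta_p)\,\|\,\ell_{\mathcal{I}}(\cdot\mid\theta_q))$, which is strictly positive precisely because $\theta_q\notin F_{\theta_p}(\mathcal{I})$. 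By the strong law of large numbers $S_t/t\to K(\theta_p,\theta_q)>0$ a.s., so $e^{-S_t}\to 0$ a.s., and since $\mu_t^{\mathcal{I}}(\theta_p)\le 1$ this gives $\mu_t^{\mathcal{I}}(\theta_q)\to 0$ a.s.; because $\Theta$ is finite a single null set handles all such $\theta_q$. (Equivalently, one can invoke Theorem~\ref{thm2}(b) along a summable sequence $\delta_n\downarrow 0$ with $\epsilon$ fixed below $\min_{\theta_p\ne\theta_q}K(\theta_p,\theta_q)$ and apply Borel--Cantelli.) Combining with part (b) and $\sum_{\theta\in\Theta}\mu_t^{\mathcal{I}}(\theta)=1$, passing to the limit gives $\sum_{\theta_q\in F_{\theta_p}(\mathcal{I})}\mu_\infty^{\mathcal{I}}(\theta_q)=1$ a.s.\ with all these terms equal, hence each equals $1/|F_{\theta_p}(\mathcal{I})|$.

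For the last claim, note that part (a) shows that for every $\mu_{th}\in(0,1)$ there is a.s.\ a finite (random) time after which $\mu_t^{\mathcal{I}}(\theta_q)\le\mu_{th}$ for all $\theta_q\notin F_{\theta_p}(\mathcal{I})$ --- i.e., the event that had probability at least $1-\delta$ in Corollary~\ref{coro1} and Theorem~\ref{thm:greedy} now has probability one. On that event the maximum misclassification penalty of Problem~\ref{prob:dsrc} depends only on $\{\xi_{pi}:\theta_i\in F_{\theta_p}(\mathcal{I})\}$, so the reduction to the weakly-submodular set-cover form (\ref{eq:submod_opt}) is exact and the bound $c(\mathcal{I}_g)\le\bigl(1+\tfrac1\gamma\log\tfrac{z(\mathcal{D})-z(\emptyset)}{z(\mathcal{D})-z(\mathcal{I}_g^{T-1})}\bigr)c(\mathcal{I}^*)$, being a deterministic consequence of Theorem~9 of \cite{das2018approximate}, holds almost surely. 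The only delicate point is the upgrade from the finite-sample, high-probability guarantee of Theorem~\ref{thm2} to an almost-sure statement: this is handled either by invoking the SLLN directly for the log-likelihood-ratio walk (using Assumption~1 for integrability and for strict positivity of the drift off $F_{\theta_p}(\mathcal{I})$) or by a Borel--Cantelli argument over a summable sequence of failure probabilities; the remaining steps are just bookkeeping over the finitely many hypotheses and a single limit exchange.
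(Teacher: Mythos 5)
Your proposal is correct and follows essentially the same route as the paper: part (b) of the corollary via the deterministic equality of likelihoods on $F_{\theta_p}(\mathcal{I})$, part (a) via the log-likelihood-ratio walk and the strong law of large numbers (which the paper's proof of Theorem~\ref{thm2} already invokes), normalization to get $1/|F_{\theta_p}(\mathcal{I})|$, and the observation that the greedy bound is a deterministic combinatorial consequence once the belief concentrates. If anything, you are more careful than the paper's own proof, which simply "takes $t\to\infty$" in the high-probability statement of Theorem~\ref{thm2} without spelling out the SLLN/Borel--Cantelli upgrade to an almost-sure limit that you make explicit.
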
 

\section{Minimum-Penalty Information Set Selection}
In this section, we consider the problem where the central node has a fixed budget for selecting information sources and seeks to minimize the maximum penalty of misclassifying the true state. Since the true state is not known a priori, the central designer has to minimize the maximum penalty for each possible true state, which is a multi-objective optimization problem under a budget constraint. We scalarize the multi-objective optimization into a single-objective optimization problem. The optimal solution to this single-objective problem is a Pareto optimal solution to the multi-objective problem (\cite{hwang2012multiple}). We now formalize the Minimum-Penalty Information Set Selection (MPIS) Problem as follows.

\begin{Problem}[MPIS]
\label{prob:mpis}
    Consider a set $\Theta=\left\{\theta_1, \ldots, \theta_m\right\}$ of possible states of the world; a set $\mathcal{D}$ of information sources, with each source $i \in \mathcal{D}$ having a cost $c_i \in \mathbb{R}_{\geq0}$; a row-stochastic penalty matrix $ \Xi = [\xi_{ij}] \in \mathbb{R}^{m \times m}$ ; and a selection budget $K \in \mathbb{R}_{\geq0}$. The MPIS Problem is to find a set of selected information sources $\mathcal{I} \subseteq \mathcal{D}$ that solves
\begin{equation}
\label{eq:mpis}
\begin{aligned}
 \min _{\mathcal{I} \subseteq \mathcal{D}} & \hspace{5pt} \displaystyle \sum_{\theta_p \in \Theta} \left(\max_{\theta_j \in F_{\theta_p}(\mathcal{I})} \xi_{pj} \right) ;
  \hspace{2pt} \text { s.t. } \sum_{i \in \mathcal{I}} c_i \leq K.
\end{aligned}
\end{equation}
\end{Problem}
Consider the following equivalent optimization problem: 
\begin{equation}
\label{eq:mpis_max}
\begin{aligned}
 \max _{\mathcal{I} \subseteq \mathcal{D}} & \hspace{5pt} \displaystyle \sum_{\theta_p \in \Theta} \left(1-\max_{\theta_j \in F_{\theta_p}(\mathcal{I})} \xi_{pj} \right);
  \hspace{2pt} \textit { s.t. } \sum_{i \in \mathcal{I}} c_i \leq K.
\end{aligned}
\end{equation}
It is easy to verify that the problem defined in (\ref{eq:mpis_max}) is equivalent to the problem defined in (\ref{eq:mpis}), i.e., the information set $\mathcal{I} \subseteq \mathcal{D}$ that optimizes the problem in Equation (\ref{eq:mpis_max}) is also the optimal solution to the Problem \ref{prob:mpis}. 
We note that $f_{\theta_p}(\mathcal{I}) = 1-\max_{\theta_j \in F_{\theta_p}(\mathcal{I})} \xi_{pj}$. We denote $\Lambda(\mathcal{I}) = \sum_{\theta_p \in \Theta} f_{\theta_p}(\mathcal{I})$. 

From Lemma \ref{lma:submod} and Lemma 3.12 in \cite{borodin2014weakly}, it follows that the objective function in (\ref{eq:mpis_max}) is approximately submodular with the submodularity ratio $\gamma$. Based on the guarantees for greedy maximization of monotone, non-decreasing, approximately submodular functions subject to Knapsack constraints in Theorem 6 of \cite{das2018approximate}, we have the following result. 
\begin{algorithm}[!t]
\caption{Greedy Algorithm for MPIS}\label{alg:greedy2}
\textbf{Input:} $\text{Data sources: }\mathcal{D},\text{Penalties: }  \Xi \in \mathbb{R}^{m \times m}, \text{Selection costs: }  c_i \hspace{2pt} \forall i \in \mathcal{D} , \text{Budget: } K \in \mathbb{R}_{>0}$  \\
\textbf{Output:} $\mathcal{I}_K$ 
\begin{algorithmic}[1]
\State {$t\gets 0, \mathcal{I}_K\gets\emptyset$} 
\While{$t \leq K$}
    \State {$j\gets\mathop{\arg\max}_{i\in{\mathcal{D}}\setminus\mathcal{I}_K} \frac{ \Lambda(\mathcal{I}_K \cup \{i\}) - \Lambda(\mathcal{I}_K)}{c_i}$}
    \State {$\mathcal{I}_K\gets \mathcal{I}_K\cup\{j\}, t\gets t+c_j$}
\EndWhile
\State {\textbf{return} $\mathcal{I}_K$}
\end{algorithmic}
\end{algorithm}

 \begin{theorem}
 \label{thm:greedy2}
     Let $\mathcal{I}_K \subseteq \mathcal{D}$ denote the information set selected by Algorithm \ref{alg:greedy2} and let $\mathcal{I}^*_K \subseteq \mathcal{D}$ denote the optimal information set for the MPIS Problem with submodularity ratio $\gamma$. For a specified threshold $\mu_{th} \in (0,1)$ and $0\leq\delta\leq1$, with probability at least $1-\delta$, Algorithm \ref{alg:greedy2} under $\Tilde{N}$ observation samples  returns a solution $\mathcal{I}_K$ to the MPIS problem (i.e., (\ref{eq:mpis})) that satisfies 
         $\Lambda(\mathcal{I}_K)  \geq \left(1 - e^{-\gamma} \right) \Lambda(\mathcal{I}^*_K) + c$ ,
     where $c = \Lambda(\emptyset)/e^{\gamma}$ and $\Tilde{N}$ is specified in (\ref{eq:nbar}).
 \end{theorem}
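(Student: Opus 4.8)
The plan is to split the statement into (i) a purely combinatorial approximation guarantee for a normalized surrogate of $\Lambda$, obtained from the weak-submodular knapsack analysis of \cite{das2018approximate}, and (ii) a high-probability ``bridge'' that identifies $\Lambda(\mathcal{I}_K)$ with the realized value of the objective in (\ref{eq:mpis_max}) via Corollary \ref{coro1}. To set up (i), define the normalized set function $g(\mathcal{I}) = \Lambda(\mathcal{I}) - \Lambda(\emptyset)$ for $\mathcal{I}\subseteq\mathcal{D}$. Since $\xi_{pj}\le 1$ we have $f_{\theta_p}(\mathcal{I}) = 1-\max_{\theta_j\in F_{\theta_p}(\mathcal{I})}\xi_{pj}\ge 0$, and because $F_{\theta_p}(\mathcal{I}) = \bigcap_{s_i\in\mathcal{I}} F_{\theta_p}(s_i)$ shrinks as $\mathcal{I}$ grows, each $f_{\theta_p}$ is monotone non-decreasing; hence $\Lambda=\sum_{\theta_p\in\Theta} f_{\theta_p}$, and therefore $g$, is non-negative, monotone non-decreasing, with $g(\emptyset)=0$. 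By Lemma \ref{lma:submod} (which invokes Assumption 2) each $f_{\theta_p}$ has submodularity ratio at least $\gamma=\underline{\xi}/\bar{\xi}$, and by Lemma 3.12 of \cite{borodin2014weakly} a non-negative sum of approximately submodular functions has submodularity ratio at least the minimum over the summands, so $g$ is approximately submodular with ratio $\gamma$.

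Next I would invoke the greedy guarantee for knapsack-constrained maximization of monotone, normalized, approximately submodular functions (Theorem 6 of \cite{das2018approximate}). The key observation is that the marginal gains used by Algorithm \ref{alg:greedy2}, namely $\Lambda(\mathcal{I}_K\cup\{i\})-\Lambda(\mathcal{I}_K)$, coincide exactly with $g(\mathcal{I}_K\cup\{i\})-g(\mathcal{I}_K)$, so the trajectory produced by Algorithm \ref{alg:greedy2} is identical whether one views it as greedily maximizing $\Lambda$ or $g$ under the budget $\sum_{i\in\mathcal{I}}c_i\le K$. Applying the cited theorem to $g$ yields $g(\mathcal{I}_K)\ge(1-e^{-\gamma})\,g(\mathcal{I}^*_K)$. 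Un-normalizing,
\begin{equation*}
\Lambda(\mathcal{I}_K)-\Lambda(\emptyset)\ \ge\ (1-e^{-\gamma})\bigl(\Lambda(\mathcal{I}^*_K)-\Lambda(\emptyset)\bigr),
\end{equation*}
and rearranging gives $\Lambda(\mathcal{I}_K)\ge(1-e^{-\gamma})\Lambda(\mathcal{I}^*_K)+e^{-\gamma}\Lambda(\emptyset)=(1-e^{-\gamma})\Lambda(\mathcal{I}^*_K)+c$ with $c=\Lambda(\emptyset)/e^{\gamma}$, which is the deterministic content of the claim.

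For (ii), note that the set $\mathcal{I}_K$ output by Algorithm \ref{alg:greedy2} is a deterministic function of the (known) likelihoods, costs, penalty matrix, and budget, so I can apply Corollary \ref{coro1} with $\mathcal{I}=\mathcal{I}_K$: for the specified $\mu_{th}$ there are $\delta,\epsilon$ such that with probability at least $1-\delta$, for all $t>\tilde{N}$ we have $\mu_t^{\mathcal{I}_K}(\theta_q)\le\mu_{th}$ for every $\theta_q\notin F_{\theta_p}(\mathcal{I}_K)$. On this event the predicted (maximum-belief) hypothesis of the central node lies in $F_{\theta_p}(\mathcal{I}_K)$, so the realized maximum misclassification penalty when the true state is $\theta_p$ equals $\max_{\theta_j\in F_{\theta_p}(\mathcal{I}_K)}\xi_{pj}=1-f_{\theta_p}(\mathcal{I}_K)$; summing over $\theta_p\in\Theta$ shows $\Lambda(\mathcal{I}_K)$ is exactly the realized value of the objective in (\ref{eq:mpis_max}). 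Combining with the deterministic bound from the previous paragraph completes the proof.

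The main obstacle I anticipate is lining up Algorithm \ref{alg:greedy2} — the plain cost–benefit (density) greedy with stopping rule $t\le K$ — with the precise form of Theorem 6 of \cite{das2018approximate} so that the leading factor is exactly $1-e^{-\gamma}$; knapsack-constrained greedy ordinarily requires a partial-enumeration or ``greedy-plus-best-singleton'' step to attain the $1-e^{-1}$ factor in the submodular case, so one must check which variant the cited theorem analyzes and that our normalization satisfies its hypotheses. The remaining items — tracking the additive constant $c=\Lambda(\emptyset)/e^{\gamma}$ through the normalization, and appealing to Assumption 2 to certify the value of $\gamma$ via Lemma \ref{lma:submod} — are routine.
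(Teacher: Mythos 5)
Your proof takes essentially the same route as the paper's: normalize $\Lambda$ to $g(\mathcal{I})=\Lambda(\mathcal{I})-\Lambda(\emptyset)$, apply Theorem 6 of \cite{das2018approximate} for weak-submodular maximization under a knapsack constraint to get $g(\mathcal{I}_K)\ge(1-e^{-\gamma})\,g(\mathcal{I}^*_K)$, un-normalize to recover the additive constant $c=\Lambda(\emptyset)/e^{\gamma}$, and attach the $1-\delta$ probability via the finite-sample belief convergence; your write-up simply makes explicit the normalization, monotonicity, and rearrangement steps that the paper leaves implicit. The caveat you raise about whether the plain cost-benefit greedy (without partial enumeration) attains the $1-e^{-\gamma}$ factor is a fair one, but it applies equally to the paper, which cites the same theorem for the same algorithm.
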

 We have the following result characterizing the asymptotic performance of the greedy algorithm.
\begin{corollary}
\label{coro:mpis}
    Instate the hypothesis and notation of Theorem \ref{thm2}. As $t \to \infty$, we have the following: (a) $\mu_{\infty}^{\mathcal{I}}(\theta_q)= 0 \quad \forall \theta_q \notin F_{\theta_p}(\mathcal{I})$, and  (b) $\mu_{\infty}^{\mathcal{I}}(\theta_q) = \frac{1}{|F_{\theta_p}(\mathcal{I})|} \quad \forall \theta_q \in F_{\theta_p}(\mathcal{I})$. The near-optimal guarantees provided in Theorem \ref{thm:greedy2} for Problem \ref{prob:mpis} hold with probability 1 (a.s.).  
\end{corollary}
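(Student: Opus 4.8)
The plan is to establish part (a) and part (b) as the $t \to \infty$ limit of Theorem \ref{thm2}, and then argue that the greedy guarantee of Theorem \ref{thm:greedy2} is insensitive to the number of samples once the belief structure has stabilized, so it transfers from the high-probability finite-sample regime to the almost-sure asymptotic regime. First I would fix the true state $\theta_p$ and an arbitrary information set $\mathcal{I} \subseteq \mathcal{D}$, and invoke Theorem \ref{thm2} with a sequence $\delta_k = 2^{-k}$ and $\epsilon_k \downarrow 0$ chosen so that $|K(\theta_p,\theta_q) - \epsilon_k| \to K(\theta_p,\theta_q) > 0$ for every $\theta_q \notin F_{\theta_p}(\mathcal{I})$ (which is strictly positive precisely because $\theta_q \notin F_{\theta_p}(\mathcal{I})$ means the KL divergence is nonzero). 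By Borel--Cantelli, since $\sum_k \delta_k < \infty$, the events in Theorem \ref{thm2} hold eventually almost surely; part (b) of that theorem then forces $\mu_t^{\mathcal{I}}(\theta_q) \to 0$ for $\theta_q \notin F_{\theta_p}(\mathcal{I})$, giving part (a) of the corollary. For part (b), part (a) of Theorem \ref{thm2} already gives $\mu_t^{\mathcal{I}}(\theta_q) = \mu_t^{\mathcal{I}}(\theta_p)$ for all $\theta_q \in F_{\theta_p}(\mathcal{I})$ eventually a.s.; combining this with $\sum_{\theta \in \Theta}\mu_t^{\mathcal{I}}(\theta) = 1$ and the fact that the mass on $\Theta \setminus F_{\theta_p}(\mathcal{I})$ vanishes, the common value must converge to $1/|F_{\theta_p}(\mathcal{I})|$. (Strictly, one should take a union over the finitely many information sets and finitely many $\theta_p$ to get a single almost-sure event.)

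Next I would argue that the greedy guarantee holds almost surely. The key observation is that Algorithm \ref{alg:greedy2} operates only on the set function $\Lambda(\cdot)$, which is defined purely through the observationally equivalent sets $F_{\theta_p}(\mathcal{I})$ and the penalty matrix $\Xi$ — it does not depend on the realized observations or on $t$ at all. Thus the output $\mathcal{I}_K$ and the inequality $\Lambda(\mathcal{I}_K) \ge (1 - e^{-\gamma})\Lambda(\mathcal{I}^*_K) + \Lambda(\emptyset)/e^{\gamma}$ from Theorem \ref{thm:greedy2} are deterministic facts about the problem instance; the only probabilistic ingredient in Theorem \ref{thm:greedy2} is the $1-\delta$ guarantee from Corollary \ref{coro1} that, after $\tilde N$ samples, the predicted hypothesis lies in $F_{\theta_p}(\mathcal{I})$, so that the penalty actually incurred equals $\max_{\theta_j \in F_{\theta_p}(\mathcal{I})} \xi_{pj} = 1 - f_{\theta_p}(\mathcal{I})$. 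Having established parts (a) and (b), I can drop the $\delta$ entirely: almost surely, for all large $t$ the belief concentrates on $F_{\theta_p}(\mathcal{I})$ (uniformly over the finitely many $\mathcal{I}$ and $\theta_p$), so the realized maximum misclassification penalty is exactly $\max_{\theta_j \in F_{\theta_p}(\mathcal{I})} \xi_{pj}$, and hence the guarantee of Theorem \ref{thm:greedy2} holds with probability $1$.

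I expect the main obstacle to be purely a matter of careful bookkeeping rather than a deep difficulty: one must make the "eventually almost surely" statements uniform over the finitely many pairs $(\theta_p, \mathcal{I})$ so that a single almost-sure event supports all the conclusions simultaneously, and one must be precise about the fact that Theorem \ref{thm:greedy2}'s combinatorial conclusion is deterministic while only the semantic interpretation of $\Lambda$ as "one minus the incurred penalty" requires the sample-complexity bound. A minor subtlety is the choice of $\epsilon_k$: it must shrink slowly enough that the threshold condition in Corollary \ref{coro1} is never violated along the sequence, but since $\min_{\theta_p,\theta_q}|K(\theta_p,\theta_q) - \epsilon|$ stays bounded away from zero for small $\epsilon$, any $\epsilon_k \to 0$ works. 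No new estimates are needed beyond Theorem \ref{thm2}, Corollary \ref{coro1}, and Theorem \ref{thm:greedy2}.
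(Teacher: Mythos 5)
Your proposal is correct and follows essentially the same route as the paper: the paper's proof (which defers to the argument for Corollary \ref{coro:mcis}) likewise takes the $t\to\infty$ limit of Theorem \ref{thm2}, uses the strict positivity of $K(\theta_p,\theta_q)$ for $\theta_q\notin F_{\theta_p}(\mathcal{I})$ together with normalization of the belief to get parts (a) and (b), and then observes that the prediction almost surely lands in $F_{\theta_p}(\mathcal{I})$ so the (deterministic) greedy bound of Theorem \ref{thm:greedy2} applies with probability $1$. Your Borel--Cantelli passage from the $1-\delta$ statement to an almost-sure one is a more careful rendering of a step the paper handles implicitly via the strong law of large numbers inside the proof of Theorem \ref{thm2}, but it is not a different argument.
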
 
\section{Alternate Penalty Metric for Information Set Selection}
\label{sec:alt_form}
In many practical scenarios, the submodularity ratio of the maximum penalty metric may be arbitrarily small (or zero) when misclassification penalties for two hypotheses are very close to each other (or equal) (see Appendix \ref{app:limit} for a detailed discussion). It is also easy to verify that the submodularity ratio $\gamma$ decreases as the number of hypotheses increase. As a result, the performance bounds for the greedy algorithms become weaker. In such scenarios, one can turn to an alternate metric for optimization, which can provide non-trivial guarantees for the performance of the greedy algorithm. To this end, we present an alternate metric to characterize the quality of an information set, based on the total penalty of misclassification, defined as follows: 
\begin{equation}
    \label{eq:tot_pen}
    \rho_{\theta_p}(\mathcal{I}) = \sum_{\theta_i \in F_{\theta_p}(\mathcal{I})} \xi_{pi}.
\end{equation}
Intuitively, in order to minimize the total penalty ($\rho_{\theta_p}(\mathcal{I})$) (or ensure that it is below a desired bound), one has to select a subset  $ \mathcal{I} \subseteq \mathcal{D}$ that ensures that the number of hypotheses which are observationally equivalent to the true hypothesis $\theta_p$, i.e., $|F_{\theta_p}(\mathcal{I})|$, is small and/or the hypotheses that are observationally equivalent to the true hypothesis have lower misclassification penalties. Effectively, this results in lower penalty associated with misclassifying the true hypothesis.

We define the Modified Minimum Cost Information Set Selection (M-MCIS) and Modified Minimum Penalty Information Set Selection (M-MPIS) Problems based on this metric as follows. 
\begin{Problem}[M-MCIS]
\label{prob:mmcis}
    Consider a set $\Theta=\left\{\theta_1, \ldots, \theta_m\right\}$ of possible states of the world, a set $\mathcal{D}$ of information sources, a selection cost $c_i \in \mathbb{R}_{>0}$ of each source $i \in \mathcal{D}$, a row-stochastic penalty matrix $\Xi = [\xi_{ij}] \in \mathbb{R}^{m \times m}$, and prescribed penalty bounds $0 \leq R'_{\theta_p} \leq 1$ for all $\theta_p \in \Theta$. The M-MCIS Problem is to find a set of selected information sources $\mathcal{I} \subseteq \mathcal{D}$ that solves
\begin{equation}
\label{eq:mmcis}
\begin{aligned}
 \min _{\mathcal{I} \subseteq \mathcal{D}} & \hspace{5pt} c(\mathcal{I}) ; \hspace{2pt}
 \text { s.t. } \displaystyle \rho_{\theta_p} (\mathcal{I})  \leq R'_{\theta_p} \quad \forall \theta_p \in \Theta.
\end{aligned}
\end{equation}
\end{Problem}
Note that the penalty bounds $R_{\theta_p}$ of the MCIS Problem differ from the bounds $R'_{\theta_p}$ of M-MCIS Problem, as the former is a bound on the maximum penalty, while the latter is a bound on the total penalty. The designer can choose the bounds $R'_{\theta_p}$ in order to achieve the desired classification performance. 
\begin{Problem}[M-MPIS]
\label{prob:mmpis}
    Consider a set $\Theta=\left\{\theta_1, \ldots, \theta_m\right\}$ of possible states of the world; a set $\mathcal{D}$ of information sources, with each source $i \in \mathcal{D}$ having a cost $c_i \in \mathbb{R}_{\geq0}$; a row-stochastic penalty matrix $ \Xi = [\xi_{ij}] \in \mathbb{R}^{m \times m}$ ; and a selection budget $K \in \mathbb{R}_{\geq0}$. The M-MPIS Problem is to find a set of selected information sources $\mathcal{I} \subseteq \mathcal{D}$ that solves
\begin{equation}
\label{eq:mmpis}
\begin{aligned}
 \min _{\mathcal{I} \subseteq \mathcal{D}} & \hspace{5pt} \displaystyle \sum_{\theta_p \in \Theta} \rho_{\theta_p}(\mathcal{I}) ;
  \hspace{2pt} \text { s.t. } \sum_{i \in \mathcal{I}} c_i \leq K.
\end{aligned}
\end{equation}
\end{Problem}

\begin{lemma}
    \label{lma:surr}
    The function $g_{\theta_p}(\mathcal{I})= 1 -\rho_{\theta_p}(\mathcal{I}):2^{\mathcal{D}}\to \mathbb{R}_{\ge 0}$ is submodular for all $\theta_p \in \Theta$.
\end{lemma}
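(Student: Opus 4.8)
The plan is to show that $g_{\theta_p}(\mathcal{I}) = 1 - \rho_{\theta_p}(\mathcal{I}) = 1 - \sum_{\theta_i \in F_{\theta_p}(\mathcal{I})} \xi_{pi}$ is submodular by reducing it to a statement purely about the set-valued map $\mathcal{I} \mapsto F_{\theta_p}(\mathcal{I})$ and its interaction with the non-negative weights $\xi_{pi}$. The key structural fact, already recorded in Equation~(\ref{eq: obs_eq_intersect}), is that $F_{\theta_p}(\mathcal{I}) = \bigcap_{s_i \in \mathcal{I}} F_{\theta_p}(s_i)$, so adding a source to $\mathcal{I}$ intersects the current observationally-equivalent set with a fixed subset $F_{\theta_p}(s)$. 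First I would fix $\theta_p$ and a source $s \notin \mathcal{I}$, and analyze the marginal gain $g_{\theta_p}(\mathcal{I} \cup \{s\}) - g_{\theta_p}(\mathcal{I}) = \rho_{\theta_p}(\mathcal{I}) - \rho_{\theta_p}(\mathcal{I} \cup \{s\}) = \sum_{\theta_i \in F_{\theta_p}(\mathcal{I}) \setminus F_{\theta_p}(s)} \xi_{pi}$, using that $F_{\theta_p}(\mathcal{I} \cup \{s\}) = F_{\theta_p}(\mathcal{I}) \cap F_{\theta_p}(s)$. This exhibits the marginal gain as the weight (under $\xi_{p\cdot}$) of the portion of the current equivalence set that source $s$ "eliminates."

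The main step is then to verify the diminishing-returns inequality: for $\mathcal{A} \subseteq \mathcal{B} \subseteq \mathcal{D}$ and $s \notin \mathcal{B}$,
\begin{equation*}
g_{\theta_p}(\mathcal{A} \cup \{s\}) - g_{\theta_p}(\mathcal{A}) \ge g_{\theta_p}(\mathcal{B} \cup \{s\}) - g_{\theta_p}(\mathcal{B}).
\end{equation*}
By the marginal-gain computation above, this amounts to
\begin{equation*}
\sum_{\theta_i \in F_{\theta_p}(\mathcal{A}) \setminus F_{\theta_p}(s)} \xi_{pi} \ \ge \ \sum_{\theta_i \in F_{\theta_p}(\mathcal{B}) \setminus F_{\theta_p}(s)} \xi_{pi},
\end{equation*}
which follows immediately from the monotonicity $F_{\theta_p}(\mathcal{B}) \subseteq F_{\theta_p}(\mathcal{A})$ (a consequence of (\ref{eq: obs_eq_intersect}), since a larger index set is a larger intersection) together with $\xi_{pi} \ge 0$: the left-hand summation is over a superset of the index set of the right-hand summation, and all terms are non-negative. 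I would also note non-negativity of $g_{\theta_p}$, i.e. $\rho_{\theta_p}(\mathcal{I}) = \sum_{\theta_i \in F_{\theta_p}(\mathcal{I})} \xi_{pi} \le \sum_{\theta_i \in \Theta} \xi_{pi} = 1$ by row-stochasticity of $\Xi$, so indeed $g_{\theta_p}: 2^{\mathcal{D}} \to \mathbb{R}_{\ge 0}$.

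I do not anticipate a genuine obstacle here — the result is essentially immediate once the marginal gain is rewritten via (\ref{eq: obs_eq_intersect}). The only point requiring slight care is the boundary/edge cases: when $s \in \mathcal{A}$ or $s \in \mathcal{B}$ the marginal gain is zero and the inequality is trivial, and when $\mathcal{A} = \emptyset$ one uses the convention $F_{\theta_p}(\emptyset) = \Theta$. One could alternatively phrase the whole argument via the general fact that for any ground set $\Theta$ with non-negative weights $w$, the set function $\mathcal{I} \mapsto w\big(\bigcap_{i \in \mathcal{I}} S_i\big)$ is supermodular (equivalently its complement-to-constant is submodular) whenever $\{S_i\}$ is a fixed family of subsets; but writing it out directly as above is cleaner and self-contained given the tools already in the paper.
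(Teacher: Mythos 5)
Your proposal is correct and follows essentially the same route as the paper: both compute the marginal gain as $\sum_{\theta_i \in F_{\theta_p}(\mathcal{I}) \setminus F_{\theta_p}(s)} \xi_{pi}$ via the intersection property (\ref{eq: obs_eq_intersect}) and then conclude from the containment $F_{\theta_p}(\mathcal{B}) \setminus F_{\theta_p}(s) \subseteq F_{\theta_p}(\mathcal{A}) \setminus F_{\theta_p}(s)$ together with non-negativity of the penalties. Your added remark on non-negativity of $g_{\theta_p}$ from row-stochasticity is a small but welcome completeness check that the paper leaves implicit.
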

By Lemma \ref{lma:surr}, we have the following result characterizing the performance of the greedy algorithms for the modified information set selection problems.

\begin{corollary}
    \label{coro:submod_greedy}
For Algorithm \ref{alg:greedy} (respectively Algorithm \ref{alg:greedy2}) applied to M-MCIS (respectively M-MPIS) Problem, the near-optimal guarantees provided in Theorem \ref{thm:greedy} (respectively Theorem \ref{thm:greedy2}) hold with $\gamma=1$. 
\end{corollary}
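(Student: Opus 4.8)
The plan is to observe that, after replacing the (weakly submodular) objects $z(\cdot)$ and $\Lambda(\cdot)$ used for MCIS/MPIS by their counterparts built from the surrogate functions $g_{\theta_p}(\cdot) = 1 - \rho_{\theta_p}(\cdot)$ of Lemma \ref{lma:surr}, the M-MCIS and M-MPIS problems become \emph{exactly} the submodular set-cover problem of \cite{wolsey1982analysis} and the submodular knapsack-maximization problem analyzed in \cite{das2018approximate}, respectively — i.e., the very problems underlying Theorems \ref{thm:greedy} and \ref{thm:greedy2}, but now with submodularity ratio $\gamma = 1$ instead of $\gamma = \underline{\xi}/\bar{\xi}$. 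The high-probability / finite-sample part of the statement is then inherited verbatim from Corollary \ref{coro1}.

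First I would record the elementary structural properties of $g_{\theta_p}$. Non-negativity holds because $\Xi$ is row-stochastic and $F_{\theta_p}(\mathcal{I}) \subseteq \Theta$, so $\rho_{\theta_p}(\mathcal{I}) \le \sum_{\theta_i \in \Theta}\xi_{pi} = 1$, giving $g_{\theta_p}(\mathcal{I}) \ge 0$ and $g_{\theta_p}(\emptyset) = 0$; monotone non-decreasingness of $g_{\theta_p}$ follows from (\ref{eq: obs_eq_intersect}) (larger $\mathcal{I}$ shrinks $F_{\theta_p}(\mathcal{I})$, hence shrinks $\rho_{\theta_p}$); and submodularity is precisely Lemma \ref{lma:surr}. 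By Remark \ref{remark:submod_ratio} each $g_{\theta_p}$ therefore has submodularity ratio $\gamma = 1$. For M-MPIS, I would rewrite $\min_{\mathcal{I}}\sum_{\theta_p}\rho_{\theta_p}(\mathcal{I})$ as $\max_{\mathcal{I}} G(\mathcal{I})$ with $G(\mathcal{I}) := \sum_{\theta_p} g_{\theta_p}(\mathcal{I}) = m - \sum_{\theta_p}\rho_{\theta_p}(\mathcal{I})$ under the same budget; since a non-negative sum of monotone non-decreasing submodular functions is again of that type, $G$ is monotone non-decreasing submodular with $\gamma = 1$. Running Algorithm \ref{alg:greedy2} with $\Lambda$ replaced by $G$ is then the standard greedy for monotone submodular maximization under a knapsack constraint, and Theorem 6 of \cite{das2018approximate} gives exactly the bound of Theorem \ref{thm:greedy2} evaluated at $\gamma = 1$.

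For M-MCIS, I would rewrite $\rho_{\theta_p}(\mathcal{I}) \le R'_{\theta_p}$ as $g_{\theta_p}(\mathcal{I}) \ge 1 - R'_{\theta_p}$, assume feasibility $g_{\theta_p}(\mathcal{D}) \ge 1 - R'_{\theta_p}$, set $g'_{\theta_p}(\mathcal{I}) := \min\{g_{\theta_p}(\mathcal{I}),\, 1 - R'_{\theta_p}\}$ and $z'(\mathcal{I}) := \sum_{\theta_p} g'_{\theta_p}(\mathcal{I})$. Truncating a monotone non-decreasing submodular function at a constant preserves all three properties, and so does the non-negative sum, so $z'$ is monotone non-decreasing submodular; repeating the argument of Lemma \ref{lma2} verbatim shows the constraint set of (\ref{eq:mmcis}) is met iff $z'(\mathcal{I}) = z'(\mathcal{D})$, so M-MCIS is the set-cover instance of \cite{wolsey1982analysis} and Theorem 9 of \cite{das2018approximate} yields the bound of Theorem \ref{thm:greedy} with $\gamma = 1$. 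The high-probability clause is then identical to the proofs of Theorems \ref{thm:greedy} and \ref{thm:greedy2}: after $\tilde{N}$ samples, with $\tilde{N}$ as in (\ref{eq:nbar}), Corollary \ref{coro1} guarantees with probability at least $1-\delta$ that the belief over every $\theta_q \notin F_{\theta_p}(\mathcal{I})$ stays below $\mu_{th}$, so the predicted class lies in $F_{\theta_p}(\mathcal{I})$ and the realized total penalty equals $\rho_{\theta_p}(\mathcal{I})$, converting the deterministic bounds above into the claimed high-probability guarantees.

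I do not expect a genuine obstacle here: essentially all the content is (i) the closure of ``monotone non-decreasing submodular'' under non-negative sums and under truncation at a constant, and (ii) the observation that the two cited black-box results of \cite{das2018approximate} apply unchanged once $\gamma = 1$. The one step I would write out most carefully is the re-derivation of the equivalence ``feasibility $\iff z'(\mathcal{I}) = z'(\mathcal{D})$'' for the truncated surrogate $z'$ — the analogue of Lemma \ref{lma2} — since this is where the particular form $g_{\theta_p} = 1 - \rho_{\theta_p}$ and the feasibility assumption are actually used.
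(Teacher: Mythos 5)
Your proposal is correct and follows essentially the same route as the paper: invoke Lemma \ref{lma:surr} for submodularity of $g_{\theta_p}$, recast M-MCIS as the truncated-sum set-cover instance and M-MPIS as maximization of $\sum_{\theta_p}(1-\rho_{\theta_p})$, and apply the results of \cite{das2018approximate} with $\gamma=1$. The paper's proof is merely terser; your added details (monotonicity, closure under truncation and non-negative sums, the Lemma \ref{lma2} analogue) are exactly the steps the paper leaves implicit via ``similar arguments as in Section \ref{sec:weak_submod}.''
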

From Corollary \ref{coro:submod_greedy}, we have that the total penalty metric enjoys stronger near-optimal guarantees (due to submodularity) compared to that of the maximum penalty metric (which is weak submodular) for greedy optimization. Moreover, the near-optimal guarantees for the M-MCIS and M-MPIS problems are independent of the misclassification penalties and the number of hypotheses.

\section{Empirical Evaluation}
In this section, we validate the theoretical results through numerical simulations. We present simulations for varying submodularity ratios, finite sample convergence of the beliefs and the modified information selection problems in Appendix \ref{app:exp}.

 We consider a hypothesis testing task where one has to identify (or classify) an aerial vehicle into one of the following $10$ classes: $\Theta =$ \textit{ \{cargo, passenger, freight, heavy fighter, interceptor, sailplane, hang glider, paraglider, surveillance UAV, quadrotor\}}. We will refer to this as the Aerial Vehicle Classification task (AVC task). The penalty matrix is as shown in Figure \ref{fig:example} (a). Each row of the penalty matrix is normalized.  We set $|\mathcal{D}| = 10$, the costs $c_i$ for $i \in \mathcal{D}$ are sampled uniformly from $\{1,\hdots, 10\}$. We consider the infinite-observation case and randomly generate the observationally equivalent sets $F_{\theta_p}(i)$ for each $\theta_p \in \Theta$ and $i \in \mathcal{D}$. We first consider the minimum cost information set selection problem for the AVC task. The thresholds $R_{\theta_p}$ for $\theta_p \in \{ \textit{cargo, passenger, freight, sailplane, hang glider, paraglider}\}$ are randomly sampled from $[0.7,1]$ and for $\theta_p \in \{ \textit{heavy fighter, interceptor, surveillance UAV, quadrotor}\}$ are randomly sampled from $[0.1,0.4]$.
For 100 randomly generated instances, we run Algorithm \ref{alg:greedy} to find the greedy information set $\mathcal{I}_g$ and find the optimal information set $\mathcal{I}^*$ using brute-force search. We plot the ratio of cost of the greedy information set to that of the optimal, i.e.,   $c(\mathcal{I}_g) / c(\mathcal{I^*}) $, in Figure \ref{fig:example} (b).
\vspace{-5pt}
\begin{figure}[!ht]
    \centering
    \subfigure[]{\includegraphics[width=0.3\textwidth]{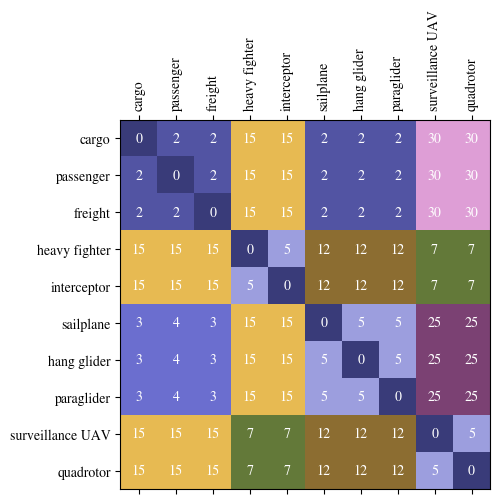}} 
    \subfigure[]{\includegraphics[width=0.31\textwidth]{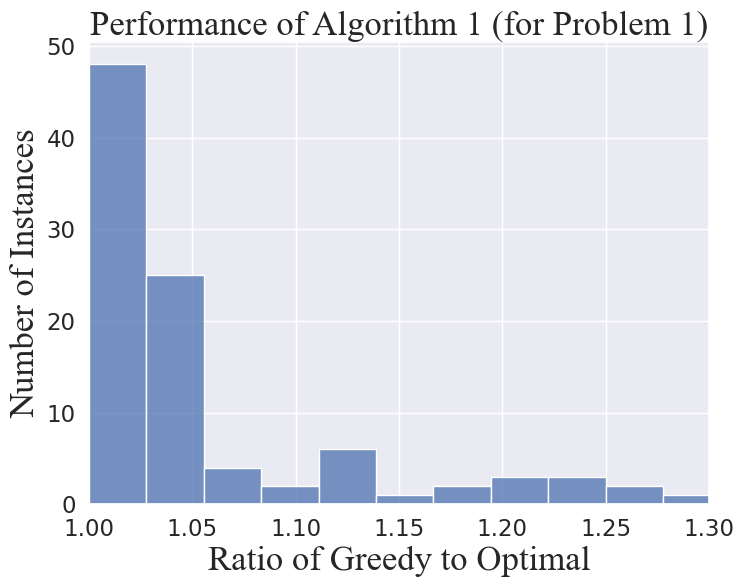}} 
    \subfigure[]{\includegraphics[width=0.3\textwidth]{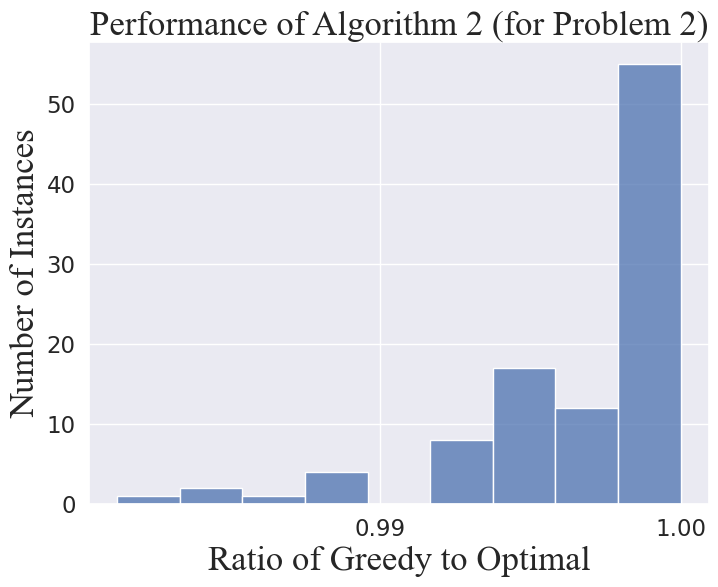}}
    \caption{ (a) Penalty Matrix for the Aerial Vehicle Classification (AVC) task, (b) Performance of Algorithm \ref{alg:greedy} (for Problem \ref{prob:dsrc}), (c) Performance of Algorithm \ref{alg:greedy2} (for Problem \ref{prob:mpis}).}
    \label{fig:example}
\end{figure}

Next, we consider the minimum penalty information set selection problem for the AVC task. We generate 100 random instances with varying information source costs and selection budgets. We run Algorithm \ref{alg:greedy2} to find the greedy information set $\mathcal{I}_g$ and find the optimal information set $\mathcal{I}^*$ using brute-force search. We plot the ratio of greedy utility to that of the optimal, i.e., $\Lambda(\mathcal{I}_g)/\Lambda(\mathcal{I^*})$, in Figure \ref{fig:example}(c). These plots show the near-optimal performance of the greedy algorithm. Note that the penalty matrix for these instances does not satisfy Assumption 2 (uniqueness of misclassification penalties). Thus, these problem instances are not guaranteed to exhibit the weak submodularity property. Despite this, we observe that the greedy algorithms provide near-optimal performance.

\section{Conclusion}
In this work, we studied two variants of an information set selection problem for hypothesis testing: (i) selecting a minimum cost information set to ensure the maximum penalty for misclassifying the true hypothesis is below a desired bound and (ii) optimal information set selection under a limited budget to minimize the maximum penalty of misclassifying the true hypothesis. Leveraging the weak submodularity property of the performance metric, we established high-probability guarantees for greedy algorithms for both problems, along with the associated finite sample convergence rates for the Bayesian beliefs. Next, we proposed an alternate metric based on the total penalty of misclassification for information set selection, which enjoys (stronger) near-optimal performance guarantees with high-probability for the greedy algorithms. Finally, we evaluated the empirical performance of the proposed greedy algorithms over several randomly generated problem instances.
\newpage
 \acks{This material is based upon work supported by the Office of Naval Research (ONR) and Saab, Inc. under the Threat and Situational Understanding of Networked Online Machine Intelligence (TSUNOMI) program (grant no. N00014-23-C-1016). Any opinions, findings, and conclusions or recommendations expressed in this material are those of the author(s) and do not necessarily reflect the views of the ONR and/or Saab, Inc.}
 \bibliography{ref.bib}

 \newpage

\appendix
\section{Proofs}
\label{app:proof}
\subsection{{Proof of Theorem \ref{thm2}}}
\begin{theorem*}
    Let the true state of the world be $\theta_p$ and let $\mu_0 (\theta) = \frac{1}{m} \hspace{5pt} \forall \theta \in \Theta$ (i.e., uniform prior). Under Assumption 1, for $\delta, \epsilon \in [0,1]$, and L as defined in Equation (\ref{eq:kld_ratio}), and for an information set $\mathcal{I} \subseteq \mathcal{D}$, the Bayesian update rule in Equation (\ref{eq:bayes_full}) has the following property: there is an integer $N(\delta, \epsilon, L)$, such that with probability at least $1-\delta$, for all $t > N(\delta, \epsilon, L)$ we have the following: 
    \begin{enumerate}[label=(\alph*)]
        \item  $\mu_t^{\mathcal{I}}(\theta_q) =  \mu_t^{\mathcal{I}}(\theta_p)  \hspace{5pt} \forall \theta_q \in F_{\theta_p}(\mathcal{I})$, and
        \item  $\mu_t^{\mathcal{I}}(\theta_q) \leq \exp{(-t(|K(\theta_p, \theta_q)-\epsilon|))} \hspace{5pt} \forall \theta_q \notin F_{\theta_p}(\mathcal{I})$;
    \end{enumerate}
    where $K(\theta_p, \theta_q) = D_{KL}(\ell_{\mathcal{I}}(\cdot | \theta_p) || \ell_{\mathcal{I}}(\cdot | \theta_q) )$ is the Kullback-Leibler divergence measure between the likelihood functions $\ell_{\mathcal{I}}(\cdot | \theta_p)$ and $ \ell_{\mathcal{I}}(\cdot | \theta_q) $, $F_{\theta_p}(\mathcal{I})$ is defined in (\ref{eq:obs_eq_set}), and 
    \begin{equation}
        N(\delta, \epsilon,L) = \left\lceil \frac{2L^2}{\epsilon^2 } \log \frac{2}{\delta} \right\rceil.
    \end{equation}
\end{theorem*}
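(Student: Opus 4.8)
The plan is to handle parts (a) and (b) separately: part (a) is a deterministic identity, while part (b) reduces to a Hoeffding-type concentration inequality for a sum of i.i.d.\ log-likelihood ratios, with the threshold $N(\delta,\epsilon,L)$ being exactly what is needed to push the Hoeffding tail below $\delta$.

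For part (a), fix $\theta_q \in F_{\theta_p}(\mathcal{I})$. By the characterization (\ref{eq:obs_eq_set}), $\ell_{\mathcal{I}}(o_{\mathcal{I}}\mid\theta_q)=\ell_{\mathcal{I}}(o_{\mathcal{I}}\mid\theta_p)$ for every $o_{\mathcal{I}}\in\mathcal{O}_{\mathcal{I}}$, so along any realized observation sequence the products $\prod_{j=1}^{t}\ell_{\mathcal{I}}(o_{\mathcal{I},j}\mid\theta_q)$ and $\prod_{j=1}^{t}\ell_{\mathcal{I}}(o_{\mathcal{I},j}\mid\theta_p)$ are equal. Since the uniform prior gives $\mu_0(\theta_q)=\mu_0(\theta_p)=1/m$, the numerators of the Bayes update (\ref{eq:bayes_full}) for $\theta_q$ and $\theta_p$ coincide, and as they share the same normalizer we obtain $\mu_t^{\mathcal{I}}(\theta_q)=\mu_t^{\mathcal{I}}(\theta_p)$ for every $t\ge 0$ --- surely, so (a) contributes nothing to the failure probability.

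For part (b), fix $\theta_q \notin F_{\theta_p}(\mathcal{I})$, so $K(\theta_p,\theta_q)>0$. Discarding from the denominator of (\ref{eq:bayes_full}) all summands other than the one indexed by $\theta_p$ and cancelling the (equal) priors yields the pointwise bound $\mu_t^{\mathcal{I}}(\theta_q)\le\prod_{j=1}^{t}\frac{\ell_{\mathcal{I}}(o_{\mathcal{I},j}\mid\theta_q)}{\ell_{\mathcal{I}}(o_{\mathcal{I},j}\mid\theta_p)}$. Setting $X_j := \log\frac{\ell_{\mathcal{I}}(o_{\mathcal{I},j}\mid\theta_p)}{\ell_{\mathcal{I}}(o_{\mathcal{I},j}\mid\theta_q)}$, this becomes $\log\mu_t^{\mathcal{I}}(\theta_q)\le-\sum_{j=1}^{t}X_j$. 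By Assumption 1 the $X_j$ are, under $\mathbb{P}^{\theta_p}$, i.i.d.\ with mean $D_{KL}(\ell_{\mathcal{I}}(\cdot\mid\theta_p)\,\|\,\ell_{\mathcal{I}}(\cdot\mid\theta_q))=K(\theta_p,\theta_q)$ and with range controlled by (\ref{eq:kld_ratio}). Hoeffding's inequality then bounds the probability of the one-sided deviation $\frac1t\sum_{j=1}^{t}X_j\le K(\theta_p,\theta_q)-\epsilon$ by a quantity of the form $\exp(-c\,t\,\epsilon^2/L^2)$; taking $t>N(\delta,\epsilon,L)=\lceil\frac{2L^2}{\epsilon^2}\log\frac2\delta\rceil$ forces this below $\delta$ (the factor $2$ coming from using the two-sided form of Hoeffding). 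On the complementary event, $\sum_{j=1}^{t}X_j\ge t(K(\theta_p,\theta_q)-\epsilon)$, hence $\mu_t^{\mathcal{I}}(\theta_q)\le\exp(-t(K(\theta_p,\theta_q)-\epsilon))\le\exp(-t\,|K(\theta_p,\theta_q)-\epsilon|)$, as claimed. To obtain the statement for all $t>N$ simultaneously (and, if desired, uniformly over the finitely many $\theta_q\notin F_{\theta_p}(\mathcal{I})$), I would union-bound, using that the Hoeffding tails decay geometrically in $t$ and hence are summable.

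The step I expect to be the main obstacle is the passage from a single-$t$ concentration estimate to a statement valid for \emph{all} $t>N$ while still landing on the clean threshold $N(\delta,\epsilon,L)$; this is sensitive to whether one uses a one- or two-sided Hoeffding bound and to how the geometric tail over $t$ is absorbed. A secondary, bookkeeping-level issue is the range of $X_j$: since $\log\frac{\ell_{\mathcal{I}}(\cdot\mid\theta_p)}{\ell_{\mathcal{I}}(\cdot\mid\theta_q)}=\sum_{s\in\mathcal{I}}\log\frac{\ell_s(\cdot\mid\theta_p)}{\ell_s(\cdot\mid\theta_q)}$, the naive bound from (\ref{eq:kld_ratio}) scales with $|\mathcal{I}|$, so one must either read $L$ as a bound for the joint information set or argue this does not alter the stated form of $N$. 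Finally, the $|\cdot|$ in the exponent is a mild safeguard: the substantive content is the regime $\epsilon\le K(\theta_p,\theta_q)$, where $|K(\theta_p,\theta_q)-\epsilon|=K(\theta_p,\theta_q)-\epsilon$ and the argument above applies directly, while for larger $\epsilon$ the bound is essentially vacuous.
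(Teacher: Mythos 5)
Your proposal is correct and follows essentially the same route as the paper: part (a) is the same deterministic cancellation of equal likelihoods under a uniform prior, and part (b) is the same Hoeffding bound on the empirical mean of the i.i.d.\ log-likelihood ratios, with $N(\delta,\epsilon,L)$ chosen to push the two-sided tail below $\delta$. The two caveats you flag --- obtaining uniformity over all $t>N$ from a single-$t$ concentration estimate, and the fact that the range of the joint log-likelihood ratio naively scales as $L|\mathcal{I}|$ rather than $L$ --- are both present in, and glossed over by, the paper's own proof, so they do not distinguish your argument from it.
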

\begin{proof} The arguments presented in this proof are similar to Lemma 1 in \cite{mitra2020new}. Consider a class $\theta_q = \Theta \setminus \theta_p$, which is not the true state of the world. We now define, for all $\theta_q \in \Theta \setminus \theta_p$ and for all $k \in \mathbb{N}_+$,  the following: 
\begin{equation}
    \label{eq:alpha}
    \phi_k^{\mathcal{I}}(\theta_q) = \log \frac{\mu_k^{\mathcal{I}}(\theta_q)}{\mu_k^{\mathcal{I}}(\theta_p)} \hspace{4pt}\text{and}\hspace{4pt} \eta_k^{\mathcal{I}}(\theta_q) = \log \frac{\ell_{\mathcal{I}}(o_k^{\mathcal{I}}|\theta_q)}{\ell_{\mathcal{I}}(o_k^{\mathcal{I}}|\theta_p)},
\end{equation}
    where $o_k^{\mathcal{I}} \in \mathcal{O}_{\mathcal{I}}$ is the joint observation  and $\ell_{\mathcal{I}}(\cdot)$ is the joint likelihood function of the information set $\mathcal{I} \subseteq \mathcal{D}$. From the Bayesian update rule in Equation (\ref{eq:bayes_full}), we have:

    \begin{equation}
        \label{eq:alpha_beta}
        \phi_{t}^{\mathcal{I}}(\theta_q) = \phi_0^{\mathcal{I}}(\theta_q) + \sum_{k=1}^{t}\eta_k^{\mathcal{I}}(\theta_q).
    \end{equation}
    Since we assume a uniform prior, i.e., $\mu_0 (\theta) = \frac{1}{m} \hspace{5pt} \forall \theta \in \Theta$, we have that $\phi_0^{\mathcal{I}}(\theta_q) =0 \hspace{4pt} \forall \theta_q \in \Theta \setminus \theta_p$ and thus $ \phi_{t}^{\mathcal{I}}(\theta_q) =  \sum_{k=1}^{t}\eta_k^{\mathcal{I}}(\theta_q), \hspace{4pt} \forall t \in \mathbb{N}_+$. Note that $\{ \eta_k^{\mathcal{I}}(\theta_q) \}$ is a sequence of \textit{i.i.d.} random variables which are bounded and have a finite mean (by Equation \ref{eq:kld_ratio}). Each random variable $\eta_k^{\mathcal{I}}(\theta_q)$ has a mean given by $-K(\theta_p, \theta_q)$, where the mean is obtained by using the expectation operator $\mathbb{E}^{\theta_p}[\cdot]$ associated with the probability measure $\mathbb{P}^{\theta_p}$ as defined in Section \ref{sec:mcis}. By the strong law of large numbers, we have that $\frac{1}{t} \sum_{k=1}^{t}\eta_k^{\mathcal{I}}(\theta_q) \to -K(\theta_p, \theta_q) $ asymptotically almost surely (a.a.s.). 
    
    If $\theta_q \in F_{\theta_p}(\mathcal{I})$, we know from (\ref{eq:obs_eq_set}) that $\eta_k^{\mathcal{I}}(\theta_q) = 0$ for all $k \in \{1,\hdots, t\}$, and thus we have $\phi_{t}^{\mathcal{I}}(\theta_q) = 0$ for all $t \in \mathbb{N}_+$. This directly implies that $\mu_t^{\mathcal{I}}(\theta_q) =  \mu_t^{\mathcal{I}}(\theta_p)  \hspace{5pt} \forall \theta_q \in F_{\theta_p}(\mathcal{I})$, establishing part (a) of the result. Let $\hat{K}(\theta_p, \theta_q) = - \frac{1}{t} \sum_{k=1}^{t}\eta_k^{\mathcal{I}}(\theta_q)$ denote the sample mean (estimated KL divergence). Now consider a state $\theta_q \notin F_{\theta_p}(\mathcal{I})$, Equation (\ref{eq:alpha_beta}) can be equivalently written as
    \begin{equation}
    \label{eq:bel_eq}
        \mu_t^{\mathcal{I}}(\theta_q) = \mu_t^{\mathcal{I}}(\theta_p) \exp{\left( t. \frac{1}{t}\sum_{k=1}^{t}\eta_k^{\mathcal{I}}(\theta_q)\right)} = \mu_t^{\mathcal{I}}(\theta_p) \exp{( -t \hat{K}(\theta_p,\theta_q) )}.
    \end{equation}
     By Equation (\ref{eq:kld_ratio}) and Hoeffding's Inequality (\cite{hoeffding1994probability}), for all $\epsilon>0$, we have the following: 
    \begin{equation}
        \mathbb{P} \left( \left| \frac{1}{t} \sum_{k=1}^{t}\eta_k^{\mathcal{I}}(\theta_q) - (-K(\theta_p, \theta_q)) \right| \geq \epsilon  \right) \leq 2 \exp{\left( -\frac{ \epsilon^2 t}{2L^2} \right)}.
    \end{equation}
    This condition is equivalent to: 
       \begin{equation}
       \label{eq:hoeffding}
        \mathbb{P} \left( \left| \frac{1}{t} \sum_{k=1}^{t}\eta_k^{\mathcal{I}}(\theta_q) - (-K(\theta_p, \theta_q)) \right| \leq \epsilon  \right) \geq 1- 2\exp{\left( -\frac{ \epsilon^2 t}{2L^2} \right)}.
    \end{equation}
    Now let $\delta = 2\exp{\left( -\frac{ \epsilon^2 t}{2L^2}\right)}$ which yields $t =  \frac{2L^2}{\epsilon^2 } \log \frac{2}{\delta}$. The condition in Equation (\ref{eq:hoeffding}) means that, with probability at least $1-\delta$, we have that $\left|-\hat{K}(\theta_p, \theta_q)  - (-K(\theta_p, \theta_q)) \right| \leq \epsilon$, for all $t \geq \frac{2L^2}{\epsilon^2 } \log \frac{2}{\delta}$. We know that $\mu_t^{\mathcal{I}}(\theta_p) \leq 1$ for any $t \in \mathbb{N}_+$. Now, by combining Equations (\ref{eq:bel_eq}) and (\ref{eq:hoeffding}), with probability at least $1-\delta$, for all $t > N(\delta,\epsilon,L)$, we have
    \begin{equation}
        \label{eq:bel_bound}
        \mu_t^{\mathcal{I}}(\theta_q) \leq \exp{(-t(|K(\theta_p, \theta_q)-\epsilon|))} \hspace{5pt} \quad \forall \theta_q \notin F_{\theta_p}(\mathcal{I}),
    \end{equation}
    where $N(\delta,\epsilon,L) = \left\lceil \frac{2L^2}{\epsilon^2 } \log \frac{2}{\delta} \right\rceil$, establishing part (b) of the result.
    \end{proof}

\subsection{Proof of Corollary \ref{coro1}}
\begin{corollary*}
    Instate the hypothesis and notation of Theorem \ref{thm2}. For a specified threshold $\mu_{th}\in    (0,1)$ for the belief over any class $\theta_q \notin F_{\theta_p}(\mathcal{I}) $, there exists $\delta, \epsilon \in [0,1]$, for which one can guarantee with probability at least $1-\delta$ that $\mu_t^{\mathcal{I}}(\theta_q) \leq \mu_{th}$  for all $\theta_q \notin F_{\theta_p}$ and for all $t > \Tilde{N}$, where 
    \begin{equation}
         \Tilde{N} = \left\lceil  \max \left\{ \frac{2L^2}{\epsilon^2 } \log \frac{2}{\delta} , \frac{1}{ \displaystyle\min_{\theta_p,\theta_q \in \Theta} |K(\theta_p,\theta_q) -\epsilon| } \log \frac{1}{\mu_{th}}   \right\} \right\rceil.
    \end{equation}
\end{corollary*}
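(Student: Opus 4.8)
The plan is to derive Corollary~\ref{coro1} directly from the two conclusions of Theorem~\ref{thm2}. First I would observe that part (b) of Theorem~\ref{thm2} already gives, with probability at least $1-\delta$, the bound $\mu_t^{\mathcal{I}}(\theta_q) \leq \exp(-t(|K(\theta_p,\theta_q)-\epsilon|))$ for all $t > N(\delta,\epsilon,L)$ and all $\theta_q \notin F_{\theta_p}(\mathcal{I})$. So it suffices to choose $t$ large enough that the right-hand side falls below the prescribed threshold $\mu_{th}$ for \emph{every} such $\theta_q$ simultaneously. Taking logarithms, the requirement $\exp(-t(|K(\theta_p,\theta_q)-\epsilon|)) \leq \mu_{th}$ is equivalent to $t \geq \frac{1}{|K(\theta_p,\theta_q)-\epsilon|}\log\frac{1}{\mu_{th}}$, and since this must hold uniformly over all pairs $(\theta_p,\theta_q)$ with $\theta_q \notin F_{\theta_p}(\mathcal{I})$, the worst case is governed by $\min_{\theta_p,\theta_q \in \Theta}|K(\theta_p,\theta_q)-\epsilon|$, which yields the second term inside the $\max$ in the expression for $\tilde N$.

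Next I would combine this with the sample-complexity threshold $N(\delta,\epsilon,L) = \lceil \frac{2L^2}{\epsilon^2}\log\frac{2}{\delta}\rceil$ coming from Theorem~\ref{thm2}: the bound of part (b) is only valid once $t > N(\delta,\epsilon,L)$, so we need $t$ to exceed \emph{both} thresholds, hence $t > \tilde N$ with $\tilde N$ the ceiling of the maximum of the two quantities. I would also note explicitly that one is free to pick $\epsilon$ small enough (in particular $\epsilon < \min_{\theta_p,\theta_q \notin F_{\theta_p}(\mathcal{I})} K(\theta_p,\theta_q)$, which is positive by definition of the observationally equivalent set) so that $|K(\theta_p,\theta_q)-\epsilon| > 0$ and the second term is finite and well-defined; and then $\delta$ can be chosen in $[0,1]$ freely, establishing the ``there exists $\delta,\epsilon \in [0,1]$'' claim. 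Since the event on which part (b) holds already has probability at least $1-\delta$, no further union bound or probabilistic argument is needed — the same high-probability event carries through.

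The only mild subtlety — and the step I would be most careful about — is the uniformity over $\theta_q \notin F_{\theta_p}(\mathcal{I})$: Theorem~\ref{thm2} as stated gives a single high-probability event valid simultaneously for all such $\theta_q$ (the integer $N(\delta,\epsilon,L)$ does not depend on $q$), so taking the minimum of $|K(\theta_p,\theta_q)-\epsilon|$ over the finitely many relevant pairs is legitimate and does not degrade the probability. I would phrase the final step as: on the probability-$(1-\delta)$ event of Theorem~\ref{thm2}(b), for any $t > \tilde N$ and any $\theta_q \notin F_{\theta_p}(\mathcal{I})$ we have $\mu_t^{\mathcal{I}}(\theta_q) \leq \exp(-t\,|K(\theta_p,\theta_q)-\epsilon|) \leq \exp\!\big(-\tilde N \min_{\theta_p,\theta_q}|K(\theta_p,\theta_q)-\epsilon|\big) \leq \mu_{th}$, where the last inequality is exactly the defining property of the second term in $\tilde N$. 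This completes the argument.
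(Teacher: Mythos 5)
Your argument is correct and follows essentially the same route as the paper's proof: invoke Theorem \ref{thm2}(b), solve $\exp(-t|K(\theta_p,\theta_q)-\epsilon|) \leq \mu_{th}$ for $t$, take the minimum over pairs of hypotheses to get uniformity, and combine with $N(\delta,\epsilon,L)$ via the maximum. Your added remark about choosing $\epsilon$ small enough that $|K(\theta_p,\theta_q)-\epsilon|$ stays bounded away from zero is a worthwhile clarification the paper leaves implicit, but it does not change the substance of the argument.
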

\begin{proof}
    From Theorem \ref{thm2}, we know that with probability at least $1-\delta$, for all $t > N(\delta,\epsilon,L)$, we have
    \begin{equation}
        \mu_t^{\mathcal{I}}(\theta_q) \leq \exp{(-t(|K(\theta_p, \theta_q)-\epsilon|))} \hspace{5pt} \quad \forall \theta_q \notin F_{\theta_p}(\mathcal{I}),
    \end{equation}
    where $N(\delta,\epsilon,L) = \left\lceil \frac{2L^2}{\epsilon^2 } \log \frac{2}{\delta} \right\rceil$. Since we require $\mu_t^{\mathcal{I}}(\theta_q) \leq \mu_{th}$  for all $\theta_q \notin F_{\theta_p}$, we let 
        \begin{equation}
        \exp{(-t(|K(\theta_p, \theta_q)-\epsilon|))}  <  \mu_{th}\hspace{5pt} \quad \forall \theta_q \notin F_{\theta_p}(\mathcal{I}) .
    \end{equation}
    Re-arranging the terms in the above equation, we get
      \begin{equation}
        t > \frac{1}{|K(\theta_p, \theta_q)-\epsilon|} \log \frac{1}{\mu_{th}}
    \end{equation}
    In order to ensure $\mu_t^{\mathcal{I}}(\theta_q) \leq \mu_{th} \quad \forall \theta_q \notin F_{\theta_p}(\mathcal{I}) $ is satisfied for any true hypothesis $\theta_p \in \Theta$, we need the following condition to be satisfied.
      \begin{equation}
      \label{eq:ts}
        t >   \frac{1}{\displaystyle\min_{\theta_p, \theta_q \in \Theta}|K(\theta_p, \theta_q)-\epsilon|} \log \frac{1}{\mu_{th}}
    \end{equation}
    In other words, the number of samples required for the beliefs over the hypotheses $\theta_q \notin F_{\theta_p}(\mathcal{I})$ to be bounded under the specified threshold $\mu_{th}$ depends on the least KL divergence measure between the likelihood functions of any $\theta_q \notin F_{\theta_p}(\mathcal{I})$ and $\theta_p$, over all possible true hypotheses $\theta_p \in \Theta$.
    From Equation (\ref{eq:ts}) and the fact that $t>N(\delta,\epsilon,L)$, we obtain $\Tilde{N}$.
\end{proof}
\subsection{Proof of Lemma \ref{lma:submod}}
\begin{lemma*} Under Assumption 2, the function $f_{\theta_p}(\mathcal{I}) :2^{\mathcal{D} }\to\mathbb{R}_{\geq0} $ is approximately submodular for all $\theta_p \in \Theta$, with a submodularity ratio $\gamma = \underline{\xi}/ \Bar{\xi} $, where 
\begin{align}
\label{eq:xifloor_1}
\displaystyle \underline{\xi} &= \min_{\theta_p \in \Theta} \left(\min_{\theta_i,\theta_j \in \Theta}  |\xi_{pi} - \xi_{pj}|\right); \\
\label{eq:xibar}
    \Bar{\xi} &= \max_{\theta_p \in \Theta} \left(\max_{\theta_i,\theta_j \in \Theta}  |\xi_{pi} - \xi_{pj}|\right).
\end{align}
\end{lemma*}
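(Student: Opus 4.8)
The plan is to verify the defining inequality of the submodularity ratio (Definition \ref{def:submod_ratio}) directly, exploiting the intersection identity $F_{\theta_p}(\mathcal{I}) = \bigcap_{s_i \in \mathcal{I}} F_{\theta_p}(s_i)$ from (\ref{eq: obs_eq_intersect}) together with the uniqueness of the penalties (Assumption 2). Fix $\theta_p \in \Theta$ and abbreviate $M(\mathcal{I}) = \max_{\theta_j \in F_{\theta_p}(\mathcal{I})} \xi_{pj}$, so that $f_{\theta_p}(\mathcal{I}) = 1 - M(\mathcal{I}) \ge 0$ (non-negativity since $\xi_{pj} \le 1$), and for any source $a$ and any $B \subseteq \mathcal{D}$ the marginal gain is $f_{\theta_p}(B \cup \{a\}) - f_{\theta_p}(B) = M(B) - M(B \cup \{a\}) \ge 0$, the last inequality and monotonicity both following because enlarging $\mathcal{I}$ can only shrink $F_{\theta_p}(\mathcal{I})$ by (\ref{eq: obs_eq_intersect}). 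With this notation, the claim reduces to showing that for all $A, B \subseteq \mathcal{D}$,
\[ \sum_{a \in A \setminus B} \bigl( M(B) - M(B \cup \{a\}) \bigr) \;\ge\; \frac{\underline{\xi}}{\bar{\xi}}\,\bigl( M(B) - M(A \cup B) \bigr). \]

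First I would dispose of the trivial case $M(A \cup B) = M(B)$ (which includes $A \subseteq B$): then the right-hand side is zero while the left-hand side is a sum of non-negative terms. Otherwise $M(A \cup B) < M(B)$; let $\theta_{j^\star}$ be the hypothesis attaining $M(B) = \xi_{p j^\star}$, which is unique by Assumption 2. Since $M(A \cup B) < M(B)$ we have $\theta_{j^\star} \notin F_{\theta_p}(A \cup B)$, and since $\theta_{j^\star} \in F_{\theta_p}(B)$, the intersection identity forces the existence of $a^\star \in A \setminus B$ with $\theta_{j^\star} \notin F_{\theta_p}(a^\star)$, hence $\theta_{j^\star} \notin F_{\theta_p}(B \cup \{a^\star\})$. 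Because $F_{\theta_p}(B \cup \{a^\star\})$ is a non-empty (it contains $\theta_p$) subset of $F_{\theta_p}(B)$ missing $\theta_{j^\star}$, its maximizing penalty $M(B \cup \{a^\star\})$ equals $\xi_{p j'}$ for some $j' \ne j^\star$ with $\xi_{p j'} < \xi_{p j^\star}$, so $M(B) - M(B \cup \{a^\star\}) = |\xi_{p j^\star} - \xi_{p j'}| \ge \underline{\xi}$. Thus the left-hand side, being a sum of non-negative terms one of which is at least $\underline{\xi}$, is at least $\underline{\xi}$; while $M(B) - M(A \cup B) \le \max_j \xi_{pj} - \min_j \xi_{pj} \le \bar{\xi}$, so the right-hand side is at most $(\underline{\xi}/\bar{\xi})\,\bar{\xi} = \underline{\xi}$, and the inequality holds. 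Since this argument is uniform in $\theta_p$, each $f_{\theta_p}$ has submodularity ratio at least $\underline{\xi}/\bar{\xi}$, and Assumption 2 ensures $\underline{\xi} > 0$ so the bound is non-vacuous.

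The only non-routine step is the middle one: showing that a single information source $a^\star$ already accounts for a marginal drop of at least $\underline{\xi}$. This is precisely where both hypotheses are essential — the intersection identity (\ref{eq: obs_eq_intersect}) localizes the responsibility for removing $\theta_{j^\star}$ to one source, and Assumption 2 guarantees a strictly positive separation $\underline{\xi}$ between that source's resulting maximum and the previous maximum. It is also what prevents the many (possibly tiny) marginal gains on the left from being dominated by the single large drop on the right. A minor point to keep in mind throughout is that $\theta_p \in F_{\theta_p}(\mathcal{I})$ for every $\mathcal{I}$, so all the maxima $M(\cdot)$ are over non-empty sets and are well defined.
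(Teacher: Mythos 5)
Your proof is correct and follows essentially the same strategy as the paper's: handle the degenerate case where one side vanishes, then bound the left-hand side below by $\underline{\xi}$ and the increment $f_{\theta_p}(A\cup B)-f_{\theta_p}(B)$ above by $\bar{\xi}$. The only organizational difference is that you split cases on whether $f_{\theta_p}(A\cup B)-f_{\theta_p}(B)$ is zero rather than on whether the sum of marginal gains is zero, and your explicit identification of the single source $a^\star$ responsible for evicting the maximizing hypothesis makes the lower bound $\sum_{a\in A\setminus B}\bigl(f_{\theta_p}(\{a\}\cup B)-f_{\theta_p}(B)\bigr)\ge\underline{\xi}$ slightly more transparent than the paper's terse assertion of it.
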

\begin{proof}
Recall that $f_{\theta_p}(\mathcal{I}) =1- \max_{\theta_i\in F_{\theta_p}(\mathcal{I})} \xi_{pi}.$  We begin by proving the following statement:
\begin{equation}
    \sum_{a \in A \setminus B} (\ef(\{a\} \cup B) - \ef(B)) = 0 \implies  \ef(A \cup B) - \ef(B) = 0.
\end{equation}
Now let $\sum_{a \in A \setminus B} (\ef(\{a\} \cup B) - \ef(B)) = 0.$ This implies 
\begin{align} 
    \ef(\{a\} \cup B) - \ef(B) &= 0 \quad \forall a \in A \setminus B \\
    \max_{\theta_i \in F_{\theta_p}(B)} \xi_{pi} - \max_{\theta_i \in F_{\theta_p}(\{a\} \cup B)} \xi_{pi} &= 0 \quad \forall a \in A \setminus B \\
     \max_{\theta_i \in F_{\theta_p}(B)} \xi_{pi} - \max_{\theta_i \in F_{\theta_p}(a) \cap F_{\theta_p}( B)} \xi_{pi} &= 0 \quad \forall a \in A \setminus B \\ \label{eq:lhs_bound}
    \max_{\theta_i \in F_{\theta_p}(B)}  \xi_{pi} = \max_{\theta_i \in F_{\theta_p}(a) \cap F_{\theta_p}( B)} \xi_{pi}   & \quad \quad \forall a \in A \setminus B.
\end{align}
Let $\displaystyle\max_{ \theta_i \in F_{\theta_p}(B)} \xi_{pi} = \xi_{pq}$ for $\theta_q \in F_{\theta_p}(B)$. From Equation (\ref{eq:lhs_bound}) and Assumption 2, we have 
\begin{equation}
\displaystyle \max_{\theta_i \in F_{\theta_p}(a) \cap F_{\theta_p}( B)} \xi_{pi} = \xi_{pq} \quad \forall a \in A \setminus B,
\end{equation}
and it follows that $\theta_q \in F_{\theta_p}(a) \cap F_{\theta_p}( B), \forall a \in A \setminus B$. \newline

\noindent We have the following: $\displaystyle \ef(A \cup B) - \ef(B)= \max_{\theta_i \in F_{\theta_p}(B)} \xi_{pi} - \max_{\theta_i \in F_{\theta_p}(A \cup B)} \xi_{pi}$.\\

\noindent Using the fact that $A \cup B = (A \setminus B) \cup B$, we have $F_{\theta_p}(A \cup B) = F_{\theta_p}(A\setminus B) \cap F_{\theta_p}(B).$\\

\noindent Thus, we have
\begin{align}
    \label{eq:rhs_exp}
    \displaystyle \ef(A \cup B) - \ef(B)= & \max_{\theta_i \in F_{\theta_p}(B)} \xi_{pi} - \max_{\theta_i \in F_{\theta_p}(A\setminus B) \cap F_{\theta_p}(B)} \xi_{pi}.
\end{align}
From the previous argument, we have $\theta_q \in F_{\theta_p}(a) \cap F_{\theta_p}( B), \forall a \in A \setminus B$ and $\displaystyle\max_{ \theta_i \in F_{\theta_p}(B)} \xi_{pi} = \xi_{pq}$ for $\theta_q \in F_{\theta_p}(B).$ Since $F_{\theta_p}(A \setminus B) = \displaystyle \bigcap_{a \in A \setminus B} F_{\theta_p}(a)$, we have $$F_{\theta_p}(A\setminus B) \cap F_{\theta_p}(B) = \left( \displaystyle \bigcap_{a \in A \setminus B} F_{\theta_p}(a) \right) \bigcap F_{\theta_p}(B).$$  This implies $\theta_q \in F_{\theta_p}(A\setminus B) \cap F_{\theta_p}( B).$ It directly follows that 
\begin{equation}
 \max_{\theta_i \in F_{\theta_p}(B)} \xi_{pi} = \max_{\theta_i \in F_{\theta_p}(A\setminus B) \cap F_{\theta_p}(B)} \xi_{pi} = \xi_{pq}.
\end{equation}
Thus, we have $\ef(A \cup B) - \ef(B) = 0$. This gives the trivial bound of $\gamma \le 1$ (since we define $0/0=1$ for characterizing $\gamma$ (see \cite{das2018approximate})). Therefore, in order to establish a non-trivial lower bound on $\gamma$, we consider $\sum_{a \in A \setminus B} (\ef(\{a\} \cup B) - \ef(B)) > 0.$\\

\noindent We now proceed to establish a non-trivial lower bound on $\sum_{a \in A \setminus B} (\ef(\{a\} \cup B) - \ef(B)).$
\begin{align}
    \sum_{a \in A \setminus B} (\ef(\{a\} \cup B) - \ef(B)) &= \sum_{a \in A \setminus B} \left(\max_{\theta_i \in F_{\theta_p}(B)} \xi_{pi} - \max_{\theta_i \in F_{\theta_p}(\{a\} \cup B)} \xi_{pi}\right);\\
    & \geq \min_{\theta_i, \theta_j \in \Theta} |\xi_{pi} - \xi_{pj}|.
\end{align}
For all $\theta_p \in \Theta$ and for all $A, B \subseteq \mathcal{D}$, we have 
\begin{equation}
\label{eq:lb}
    \sum_{a \in A \setminus B} (\ef(\{a\} \cup B) - \ef(B)) \ge \min_{\theta_p  \in \Theta} \left( \min_{\theta_i, \theta_j \in \Theta} |\xi_{pi} - \xi_{pj}| \right) = \underline{\xi}. \quad \text{(Equation (\ref{eq:xifloor_1}))}
\end{equation}
Next, we provide an upper bound on $\displaystyle \ef(A \cup B) - \ef(B).$ 
\begin{align}
    \ef(A \cup B) - \ef(B) &=  \max_{\theta_i \in F_{\theta_p}(B)} \xi_{pi} - \max_{\theta_i \in F_{\theta_p}(A \cup B)} \xi_{pi} \\
    & \leq \max_{\theta_i, \theta_j \in \Theta} |\xi_{pi} - \xi_{pj} |.
\end{align}
For all $\theta_p \in \Theta$ and for all $A, B \subseteq \mathcal{D}$, we have
\begin{equation}
    \label{eq:ub}
    \ef(A \cup B) - \ef(B) \le \max_{\theta_p  \in \Theta} \left( \max_{\theta_i, \theta_j \in \Theta} |\xi_{pi} - \xi_{pj}| \right) = \Bar{\xi}. \quad \text{(Equation (\ref{eq:xibar}))}
\end{equation}
Due to Assumption 2, we have $0< \underline{\xi}<1$ and $0< \Bar{\xi} < 1$.
Combining inequalities (\ref{eq:lb}), (\ref{eq:ub}) and (\ref{eq:submod_ratio_def}), we have that the function $f_{\theta_p}(\mathcal{I})$ is approximately submodular with a submodularity ratio $\gamma = \underline{\xi}/\Bar{\xi}$, for all $\theta_p \in \Theta$. 
\end{proof}
\subsection{Proof of Lemma \ref{lma2}}
\begin{lemma*}
     For any $\mathcal{I} \subseteq\mathcal{D}$, the constraint $1-\max_{\theta_i \in F_{\theta_p}(\mathcal{I})} \xi_{pi} \geq 1-R_{\theta_p}$ holds for all $\theta_p \in \Theta$ if and only if $\sum_{\theta_p \in \Theta} f_{\theta_p}^{\prime}(\mathcal{I})=\sum_{\theta_p \in \Theta} f_{\theta_p}^{\prime}(\mathcal{D})$. 
 \end{lemma*}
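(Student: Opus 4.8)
The plan is to reduce the statement to a termwise comparison of the clipped functions $f_{\theta_p}'(\mathcal{I}) = \min\{f_{\theta_p}(\mathcal{I}),\, 1-R_{\theta_p}\}$. First I would record two elementary facts. (i) Since $f_{\theta_p}(\cdot)$ is monotone non-decreasing (as noted right after Equation~(\ref{eq: obs_eq_intersect})) and $\mathcal{I}\subseteq\mathcal{D}$, we have $f_{\theta_p}(\mathcal{I}) \le f_{\theta_p}(\mathcal{D})$, and hence $f_{\theta_p}'(\mathcal{I}) \le f_{\theta_p}'(\mathcal{D})$ for every $\theta_p\in\Theta$. (ii) By the feasibility assumption $f_{\theta_p}(\mathcal{D}) \ge 1-R_{\theta_p}$, we get $f_{\theta_p}'(\mathcal{D}) = \min\{f_{\theta_p}(\mathcal{D}),\,1-R_{\theta_p}\} = 1-R_{\theta_p}$ for all $\theta_p\in\Theta$. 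These are the only two ingredients needed.

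For the ($\Rightarrow$) direction, assume the constraint $1-\max_{\theta_i\in F_{\theta_p}(\mathcal{I})}\xi_{pi} \ge 1-R_{\theta_p}$ holds for all $\theta_p$; equivalently $f_{\theta_p}(\mathcal{I}) \ge 1-R_{\theta_p}$ for all $\theta_p$. Then $f_{\theta_p}'(\mathcal{I}) = \min\{f_{\theta_p}(\mathcal{I}),\,1-R_{\theta_p}\} = 1-R_{\theta_p} = f_{\theta_p}'(\mathcal{D})$ by fact (ii), and summing over $\theta_p\in\Theta$ gives $\sum_{\theta_p\in\Theta}f_{\theta_p}'(\mathcal{I}) = \sum_{\theta_p\in\Theta}f_{\theta_p}'(\mathcal{D})$.

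For the ($\Leftarrow$) direction, assume $\sum_{\theta_p\in\Theta}f_{\theta_p}'(\mathcal{I}) = \sum_{\theta_p\in\Theta}f_{\theta_p}'(\mathcal{D})$. By fact (i) each summand satisfies $f_{\theta_p}'(\mathcal{I}) \le f_{\theta_p}'(\mathcal{D})$, so a sum of nonnegative terms $f_{\theta_p}'(\mathcal{D}) - f_{\theta_p}'(\mathcal{I}) \ge 0$ being zero forces every term to vanish: $f_{\theta_p}'(\mathcal{I}) = f_{\theta_p}'(\mathcal{D}) = 1-R_{\theta_p}$ for all $\theta_p$. Hence $\min\{f_{\theta_p}(\mathcal{I}),\,1-R_{\theta_p}\} = 1-R_{\theta_p}$, which is equivalent to $f_{\theta_p}(\mathcal{I}) \ge 1-R_{\theta_p}$, i.e., the penalty constraint holds for every $\theta_p\in\Theta$. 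This completes both directions.

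There is no substantive obstacle here; the argument is purely algebraic. The only point that must be handled with care is invoking the standing feasibility assumption $f_{\theta_p}(\mathcal{D}) \ge 1-R_{\theta_p}$ to identify $f_{\theta_p}'(\mathcal{D})$ with $1-R_{\theta_p}$ — without it, the equivalence would fail — and the observation that equality of sums under termwise domination forces termwise equality. I would state both explicitly at the start of the proof so the two implications each become a one-line consequence.
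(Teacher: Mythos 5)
Your proof is correct and follows essentially the same route as the paper's: both directions hinge on the feasibility assumption $f_{\theta_p}(\mathcal{D})\ge 1-R_{\theta_p}$ giving $f_{\theta_p}'(\mathcal{D})=1-R_{\theta_p}$, and on the fact that equality of sums under termwise domination forces termwise equality. The only cosmetic difference is that you obtain the termwise bound $f_{\theta_p}'(\mathcal{I})\le f_{\theta_p}'(\mathcal{D})$ via monotonicity, whereas the paper reads $f_{\theta_p}'(\mathcal{I})\le 1-R_{\theta_p}$ directly off the definition of the clipped function; both are immediate and equivalent here.
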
 

\begin{proof} Suppose the constraints $1-\max_{\theta_i \in F_{\theta_p}(\mathcal{I})} \xi_{pi} \geq 1-R_{\theta_p}$ hold for all $\theta_p \in \Theta$. It follows that $f_{\theta_p}^{\prime}(\mathcal{I})= 1-R_{\theta_p}$ for all $\theta_p \in \Theta$. Noting that, $f_{\theta_p}(\mathcal{D}) \geq 1-R_{\theta_p}$, we have $f_{\theta_p}^{\prime}(\mathcal{D})=1-R_{\theta_p}$ for all $\theta_p \in \Theta$, which implies $\sum_{\theta_p \in \Theta} f_{\theta_p}^{\prime}(\mathcal{I})=\sum_{\theta_p \in \Theta} f_{\theta_p}^{\prime}(\mathcal{D})$. Conversely, suppose $\sum_{\theta_p \in \Theta} f_{\theta_p}^{\prime}(\mathcal{I})=\sum_{\theta_p \in \Theta} f_{\theta_p}^{\prime}(\mathcal{D})$, i.e., $\sum_{\theta_p \in \Theta}\left(f_{\theta_p}^{\prime}(\mathcal{I})-\left(1-R_{\theta_p}\right)\right)=0$. Noting that $f_{\theta_p}^{\prime}(\mathcal{I}) \leq 1-R_{\theta_p}$ for all $\mathcal{I} \subseteq\mathcal{D}$, we have $f_{\theta_p}^{\prime}(\mathcal{I})=1-R_{\theta_p}$ for all $\theta_p \in \Theta$, i.e., $f_{\theta_p}(\mathcal{I}) \geq 1-R_{\theta_p}$ for all $\theta_p \in \Theta$. This completes the proof of the lemma.
\end{proof}

\subsection{Proof of Theorem \ref{thm:greedy}}
The performance guarantees presented in this result are that of the approximate submodular set covering problem in Theorem 9 of \cite{das2018approximate}, where 
\begin{align}
\gamma_{S^{NG},k^*(C)}(f) &= \gamma;\\
C &= z(\mathcal{D}) - z(\mathcal{\emptyset}) ;\\
    k &= c(\mathcal{I}_g);\\
    k^*(C) &= c(\mathcal{I}^*);\\
    f(S_{k-1}^{NG}) &= z(\mathcal{I}_g^{T-1}) - z(\mathcal{\emptyset}).
\end{align}
With probability at least $1-\delta$, Algorithm \ref{alg:greedy}, under $\Tilde{N}(\epsilon, \delta, L)$ observation samples, enjoys the same performance guarantees, where $\delta, \epsilon$ are specified by the central designer.

\subsection{Proof of Corollary \ref{coro:mcis}}
\begin{corollary*}
    Instate the hypothesis and notation of Theorem \ref{thm2}. As $t \to \infty$, we have the following: (a) $\mu_{\infty}^{\mathcal{I}}(\theta_q) = 0 \quad \forall \theta_q \notin F_{\theta_p}(\mathcal{I})$, and  (b) $\mu_{\infty}^{\mathcal{I}}(\theta_q) = \frac{1}{|F_{\theta_p}(\mathcal{I})|} \quad \forall \theta_q \in F_{\theta_p}(\mathcal{I})$. The near-optimal guarantees provided in Theorem \ref{thm:greedy} for Problem \ref{prob:dsrc} hold with probability 1 (a.a.s.).  
\end{corollary*}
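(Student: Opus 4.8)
The plan is to read off the asymptotic belief profile directly from the strong law of large numbers (SLLN), exactly as foreshadowed in the proof of Theorem~\ref{thm2}, and then to observe that the only randomness entering the greedy guarantee of Theorem~\ref{thm:greedy} is the event that the beliefs over the classes $\theta_q \notin F_{\theta_p}(\mathcal{I})$ eventually fall below the threshold $\mu_{th}$; by part~(a) that event has probability one, which is what upgrades the $1-\delta$ statement to a probability-one statement.

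First, for part~(a): fix $\theta_q \notin F_{\theta_p}(\mathcal{I})$. Then $K(\theta_p,\theta_q) = D_{KL}\big(\ell_{\mathcal{I}}(\cdot|\theta_p)\,\|\,\ell_{\mathcal{I}}(\cdot|\theta_q)\big) > 0$ strictly, and, by Assumption~1 and (\ref{eq:kld_ratio}), $\{\eta_k^{\mathcal{I}}(\theta_q)\}$ is an i.i.d.\ bounded sequence with mean $-K(\theta_p,\theta_q)$. Since $\mu_0$ is uniform, $\phi_t^{\mathcal{I}}(\theta_q) = \sum_{k=1}^t \eta_k^{\mathcal{I}}(\theta_q)$, so by the SLLN $\frac{1}{t}\phi_t^{\mathcal{I}}(\theta_q) \to -K(\theta_p,\theta_q) < 0$ almost surely, hence $\phi_t^{\mathcal{I}}(\theta_q) \to -\infty$ and $\mu_t^{\mathcal{I}}(\theta_q)/\mu_t^{\mathcal{I}}(\theta_p) = e^{\phi_t^{\mathcal{I}}(\theta_q)} \to 0$ almost surely; since $\mu_t^{\mathcal{I}}(\theta_p)\le 1$ this gives $\mu_\infty^{\mathcal{I}}(\theta_q)=0$ a.s. Finiteness of $\Theta$ lets me take a single null exceptional set uniformly over all $\theta_q\notin F_{\theta_p}(\mathcal{I})$ and all $\theta_p\in\Theta$. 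For part~(b), recall from the proof of Theorem~\ref{thm2}(a) that for $\theta_q\in F_{\theta_p}(\mathcal{I})$ the increments $\eta_k^{\mathcal{I}}(\theta_q)$ vanish identically, so $\mu_t^{\mathcal{I}}(\theta_q)=\mu_t^{\mathcal{I}}(\theta_p)$ for every $t$ (deterministically). Summing the identity $\sum_{\theta\in\Theta}\mu_t^{\mathcal{I}}(\theta)=1$ and splitting over $F_{\theta_p}(\mathcal{I})$ and its complement, the complement contributes $o(1)$ a.s.\ by part~(a), while the remaining $|F_{\theta_p}(\mathcal{I})|$ terms all equal $\mu_t^{\mathcal{I}}(\theta_p)$; hence $\mu_t^{\mathcal{I}}(\theta_p) \to 1/|F_{\theta_p}(\mathcal{I})|$ a.s., i.e.\ $\mu_\infty^{\mathcal{I}}(\theta_q)=1/|F_{\theta_p}(\mathcal{I})|$ for all $\theta_q\in F_{\theta_p}(\mathcal{I})$.

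For the greedy guarantee, Corollary~\ref{coro1} and the remark after it establish that as soon as the beliefs over every $\theta_q\notin F_{\theta_p}(\mathcal{I})$ are below $\mu_{th}$, the central node predicts some class in $F_{\theta_p}(\mathcal{I})$, so the effective penalties are $\{\xi_{pj}:\theta_j\in F_{\theta_p}(\mathcal{I})\}$ and the reformulation (\ref{eq:submod_opt}) --- hence Theorem~\ref{thm:greedy} --- applies. By part~(a), for every fixed $\mu_{th}\in(0,1)$ this event holds for all sufficiently large $t$ almost surely, so the probability-$(1-\delta)$ event in Theorem~\ref{thm:greedy} can be replaced, in the $t\to\infty$ limit, by a probability-one event; consequently $c(\mathcal{I}_g)\le\big(1+\frac{1}{\gamma}\log\frac{z(\mathcal{D})-z(\emptyset)}{z(\mathcal{D})-z(\mathcal{I}_g^{T-1})}\big)c(\mathcal{I}^*)$ holds with probability $1$. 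The main point requiring care --- and essentially the only obstacle --- is to make the passage to the a.s.\ statement via the SLLN rather than by naively sending $\delta\to 0$ in the finite-sample bound of Theorem~\ref{thm2}, and to keep the exceptional null set uniform over the finitely many pairs $(\theta_p,\theta_q)$; both are immediate because $\Theta$ is finite.
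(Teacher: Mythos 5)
Your proposal is correct and follows essentially the same route as the paper: read off $\mu_t^{\mathcal{I}}(\theta_q)\to 0$ for $\theta_q\notin F_{\theta_p}(\mathcal{I})$ from the log-likelihood-ratio decomposition underlying Theorem \ref{thm2}, use the deterministic identity $\mu_t^{\mathcal{I}}(\theta_q)=\mu_t^{\mathcal{I}}(\theta_p)$ together with normalization to get part (b), and conclude that the designer asymptotically predicts within $F_{\theta_p}(\mathcal{I})$ so the guarantee of Theorem \ref{thm:greedy} holds almost surely. If anything, you are slightly more careful than the paper at the one delicate step --- you ground the almost-sure limit directly in the strong law of large numbers (noting $K(\theta_p,\theta_q)>0$ off $F_{\theta_p}(\mathcal{I})$ and taking a single null set over the finitely many pairs), whereas the paper simply passes to the limit in the probability-$(1-\delta)$ finite-sample bound.
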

\begin{proof}
    From Theorem \ref{thm2}, the following hold for $t \to \infty$:
        \begin{enumerate}[label=(\alph*)]
        \item  $\mu_t^{\mathcal{I}}(\theta_q) =  \mu_t^{\mathcal{I}}(\theta_p)  \hspace{5pt} \forall \theta_q \in F_{\theta_p}(\mathcal{I})$, and
        \item  $\mu_t^{\mathcal{I}}(\theta_q) \leq \exp{(-t(|K(\theta_p, \theta_q)-\epsilon|))} \hspace{5pt} \forall \theta_q \notin F_{\theta_p}(\mathcal{I})$;
    \end{enumerate}
    We have $\lim_{t \to \infty} \mu_t^{\mathcal{I}}(\theta_q) = 0 \quad \forall \theta_q \notin F_{\theta_p}(\mathcal{I}) $ and $\mu_t^{\mathcal{I}}(\theta_q) =  \mu_t^{\mathcal{I}}(\theta_p)  \hspace{5pt} \forall \theta_q \in F_{\theta_p}(\mathcal{I})$. Since $\sum_{\theta \in \Theta} \mu_t^{\mathcal{I}}(\theta) = 1$ for all $t$, we have $\lim_{t \to \infty}  \mu_{t}^{\mathcal{I}}(\theta_q)= \frac{1}{|F_{\theta_p}(\mathcal{I})|} \quad \forall \theta_q \in F_{\theta_p}(\mathcal{I})$.
    Since we have $\lim_{t \to \infty} \mu_t^{\mathcal{I}}(\theta_q) = 0 \quad \forall \theta_q \notin F_{\theta_p}(\mathcal{I}) $, the central designer will predict one of the hypotheses $\theta_q \in F_{\theta_p}(\mathcal{I})$ as the true state of the world, with probability 1. Therefore, the guarantees provided in Theorem \ref{thm:greedy} for Problem \ref{prob:dsrc} hold with probability 1. 
\end{proof}
\subsection{Proof of Theorem \ref{thm:greedy2}}
The performance guarantees presented in this result are that of the approximate submodular maximization under Knapsack constraints in Theorem 6 of \cite{das2018approximate}, where 
\begin{align}
\gamma_{S^{NG},k}(f) &= \gamma;\\
    f(S^{NG}) &= \Lambda(\mathcal{I}_g) - \Lambda(\mathcal{\emptyset});\\
    OPT &= \Lambda(\mathcal{I}^*) - \Lambda(\mathcal{\emptyset}).
\end{align}

With probability at least $1-\delta$, Algorithm \ref{alg:greedy2}, under $\Tilde{N}(\epsilon, \delta, L)$ observation samples, enjoys the same performance guarantees, where $\delta, \epsilon$ are specified by the central designer.

\subsection{Proof of Corollary \ref{coro:mpis}}
The construction of the proof and arguments are similar to those presented in Corollary \ref{coro:mcis}.

\subsection{Proof of Lemma \ref{lma:surr}}
We first begin by defining a submodular set function.
\begin{definition}[Submodular Set Function](\cite{nemhauser1978analysis})
\label{def:submodular}
 A set function $f:2^{\Omega }\to\mathbb{R}$ is submodular if it satisfies $f(X\cup\{j\})-f(X)\ge f(Y\cup\{j\})-f(Y)$, $\forall X\subseteq Y\subseteq \Omega $ and  $ \forall j\in \Omega \setminus Y$.
\end{definition}
We now establish the following result.
\begin{lemma*} The function $ g_{\theta_p}(\mathcal{I}):2^{\mathcal{D} }\to\mathbb{R}_{\geq0} $ is submodular for all $\theta_p \in \Theta$.
\end{lemma*}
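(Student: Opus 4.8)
The plan is to show that $g_{\theta_p}(\mathcal{I}) = 1 - \rho_{\theta_p}(\mathcal{I}) = 1 - \sum_{\theta_i \in F_{\theta_p}(\mathcal{I})} \xi_{pi}$ satisfies the diminishing-returns inequality of Definition \ref{def:submodular}. First I would fix $\theta_p \in \Theta$ and observe that, since $g_{\theta_p}(\emptyset) = 1 - \sum_{\theta_i \in \Theta}\xi_{pi} = 1 - 1 = 0$ by row-stochasticity and $\rho_{\theta_p}(\mathcal{I}) \le \rho_{\theta_p}(\emptyset) = 1$ (because $F_{\theta_p}(\mathcal{I}) \subseteq \Theta$ and penalties are nonnegative), we indeed have $g_{\theta_p}: 2^{\mathcal{D}} \to \mathbb{R}_{\ge 0}$. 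The key structural fact to exploit is Equation (\ref{eq: obs_eq_intersect}): $F_{\theta_p}(\mathcal{I}) = \bigcap_{s \in \mathcal{I}} F_{\theta_p}(s)$, so adding a source $j$ replaces $F_{\theta_p}(X)$ by $F_{\theta_p}(X) \cap F_{\theta_p}(j)$, i.e. it \emph{removes} exactly the set of classes $F_{\theta_p}(X) \setminus F_{\theta_p}(j)$ from the observationally equivalent set.

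The main computation is then the marginal gain. For any $X \subseteq \mathcal{D}$ and $j \in \mathcal{D} \setminus X$,
\begin{equation}
g_{\theta_p}(X \cup \{j\}) - g_{\theta_p}(X) = \rho_{\theta_p}(X) - \rho_{\theta_p}(X\cup\{j\}) = \sum_{\theta_i \in F_{\theta_p}(X) \setminus F_{\theta_p}(j)} \xi_{pi},
\end{equation}
using that $F_{\theta_p}(X\cup\{j\}) = F_{\theta_p}(X) \cap F_{\theta_p}(j) \subseteq F_{\theta_p}(X)$. Now take $X \subseteq Y \subseteq \mathcal{D}$. Then $F_{\theta_p}(Y) \subseteq F_{\theta_p}(X)$, hence $F_{\theta_p}(Y) \setminus F_{\theta_p}(j) \subseteq F_{\theta_p}(X) \setminus F_{\theta_p}(j)$. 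Since all terms $\xi_{pi} \ge 0$, summing over the smaller index set yields a smaller value:
\begin{equation}
g_{\theta_p}(Y \cup \{j\}) - g_{\theta_p}(Y) = \sum_{\theta_i \in F_{\theta_p}(Y) \setminus F_{\theta_p}(j)} \xi_{pi} \le \sum_{\theta_i \in F_{\theta_p}(X) \setminus F_{\theta_p}(j)} \xi_{pi} = g_{\theta_p}(X \cup \{j\}) - g_{\theta_p}(X),
\end{equation}
which is exactly the submodularity inequality. Since $\theta_p$ was arbitrary, the claim follows for all $\theta_p \in \Theta$.

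There is no serious obstacle here; the argument is essentially a one-line monotonicity observation once the marginal gain is written in the closed form above. The only points requiring care are (i) correctly invoking Equation (\ref{eq: obs_eq_intersect}) to express $F_{\theta_p}$ of a union as an intersection, so that the marginal gain is a sum of nonnegative penalties over a set that \emph{shrinks} as the base set grows, and (ii) confirming nonnegativity of $g_{\theta_p}$ via row-stochasticity so that the codomain $\mathbb{R}_{\ge 0}$ is justified. Notably, unlike Lemma \ref{lma:submod}, this proof does not need Assumption 2 (uniqueness of penalties), which is consistent with the paper's remark that the total-penalty metric enjoys stronger, penalty-independent guarantees.
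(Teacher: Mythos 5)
Your proof is correct and follows essentially the same route as the paper's: both compute the marginal gain in closed form as $\sum_{\theta_i \in F_{\theta_p}(X) \setminus F_{\theta_p}(j)} \xi_{pi}$ via the intersection property of $F_{\theta_p}$ and then conclude from the monotonicity $F_{\theta_p}(Y) \subseteq F_{\theta_p}(X)$ together with nonnegativity of the penalties. Your added check that $g_{\theta_p}$ is nonnegative via row-stochasticity is a harmless refinement the paper leaves implicit.
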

\begin{proof}
    Recall that $g_{\theta_p}(\mathcal{I}) = 1 - \sum_{\theta_i \in F_{\theta_p}(\mathcal{I})} \xi_{pi}.$ Consider any $\mathcal{I}_1 \subseteq \mathcal{I}_2 \subseteq\mathcal{D}$ and any $j \in \mathcal{D} \backslash \mathcal{I}_2$. We have the following:
$$
\begin{aligned}
  g_{\theta_p}\left(\mathcal{I}_1 \cup\{j\}\right) - g_{\theta_p}\left(\mathcal{I}_1\right) 
= & \sum_{\theta_i \in F_{\theta_p}\left(\mathcal{I}_1\right)} \xi_{pi} - \sum_{\theta_i \in F_{\theta_p}\left(\mathcal{I}_1 \cup\{j\}\right)}  \xi_{pi}  \\
= & \sum_{\theta_i \in F_{\theta_p}\left(\mathcal{I}_1\right)} \xi_{pi} -\sum_{\theta_i \in F_{\theta_p}\left(\mathcal{I}_1\right) \cap F_{\theta_p}(j)} \xi_{pi} \\
= & \sum_{\theta_i \in  F_{\theta_p}\left(\mathcal{I}_1\right) \backslash \left(F_{\theta_p}\left(\mathcal{I}_1\right) \cap F_{\theta_p}(j)\right)} \xi_{pi}\\
=&\sum_{\theta_i \in F_{\theta_p}(\mathcal{I}_1) \backslash F_{\theta_p}\left(j\right)} \xi_{pi} .
\end{aligned}
$$
Note that the above arguments follow from  De Morgan's laws. Similarly, we  have
$$
g_{\theta_p}\left(\mathcal{I}_2 \cup\{j\}\right)- g_{\theta_p}\left(\mathcal{I}_2\right)=\sum_{\theta_i \in F_{\theta_p}(\mathcal{I}_2) \backslash F_{\theta_p}\left(j\right)} \xi_{pi} .
$$
Since $\mathcal{I}_1 \subseteq \mathcal{I}_2$, we have $F_{\theta_p}(\mathcal{I}_2) \backslash F_{\theta_p}\left(j\right) \subseteq F_{\theta_p}(\mathcal{I}_1) \backslash F_{\theta_p}\left(j\right)$. Thus,
$$
g_{\theta_p}\left(\mathcal{I}_1 \cup\{j\}\right) - g_{\theta_p}\left(\mathcal{I}_1\right) \geq g_{\theta_p}\left(\mathcal{I}_2 \cup\{j\}\right)-g_{\theta_p}\left(\mathcal{I}_2\right) .
$$
Since the above arguments hold for all $\theta_p \in \Theta$, the function $g_{\theta_p}(\cdot)$ is submodular for all $\theta_p \in \Theta$.
\end{proof}

\subsection{Proof of Corollary \ref{coro:submod_greedy}}
\begin{corollary*}
For Algorithm \ref{alg:greedy} (respectively Algorithm \ref{alg:greedy2}) applied to M-MCIS (respectively M-MPIS) Problem, the high-probability near-optimal guarantees provided in Theorem \ref{thm:greedy} (respectively Theorem \ref{thm:greedy2}) hold with $\gamma=1$. 
\end{corollary*}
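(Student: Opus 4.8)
The plan is to mirror, step for step, the development of Sections~\ref{sec:mcis}--\ref{sec:weak_submod} (for M-MCIS) and of the MPIS development (for M-MPIS), but with the surrogate $g_{\theta_p}(\mathcal{I}) = 1 - \rho_{\theta_p}(\mathcal{I})$ playing the role previously played by $f_{\theta_p}$. First I would record the properties of $g_{\theta_p}$ that the guarantees of \cite{das2018approximate} require: (i) \emph{non-negativity}, since row-stochasticity of $\Xi$ gives $\rho_{\theta_p}(\mathcal{I}) = \sum_{\theta_i \in F_{\theta_p}(\mathcal{I})}\xi_{pi} \le \sum_{\theta_i \in \Theta}\xi_{pi} = 1$; (ii) \emph{monotone non-decreasingness}, which follows from \eqref{eq: obs_eq_intersect} because $F_{\theta_p}(\mathcal{I})$ is non-increasing in $\mathcal{I}$, so $\rho_{\theta_p}$ is non-increasing and $g_{\theta_p}$ is non-decreasing; and (iii) \emph{submodularity}, already established in Lemma~\ref{lma:surr}. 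By Remark~\ref{remark:submod_ratio}, (iii) means $g_{\theta_p}$ has submodularity ratio $\gamma = 1$, and so does any non-negative sum of such functions (Lemma~3.12 of \cite{borodin2014weakly}).

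For M-MPIS the reduction is immediate: $\sum_{\theta_p \in \Theta}\rho_{\theta_p}(\mathcal{I}) = m - \sum_{\theta_p \in \Theta} g_{\theta_p}(\mathcal{I})$, so \eqref{eq:mmpis} is equivalent to maximizing the non-negative, monotone non-decreasing, submodular objective $\sum_{\theta_p} g_{\theta_p}(\mathcal{I})$ subject to $\sum_{i \in \mathcal{I}} c_i \le K$. Running Algorithm~\ref{alg:greedy2} with this objective in place of $\Lambda$ and invoking Theorem~6 of \cite{das2018approximate} with $\gamma = 1$ yields the bound of Theorem~\ref{thm:greedy2}; the high-probability wrapper --- that after $\tilde N$ samples, with probability at least $1-\delta$, the designer predicts within $F_{\theta_p}(\mathcal{I})$ so that $\rho_{\theta_p}(\mathcal{I})$ is the relevant total penalty --- is exactly the Corollary~\ref{coro1} argument used to prove Theorem~\ref{thm:greedy2}, and carries over unchanged.

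For M-MCIS I would reproduce the truncation construction of Section~\ref{sec:weak_submod}. Rewrite the constraint $\rho_{\theta_p}(\mathcal{I}) \le R'_{\theta_p}$ as $g_{\theta_p}(\mathcal{I}) \ge 1 - R'_{\theta_p}$; assume feasibility, i.e.\ $g_{\theta_p}(\mathcal{D}) \ge 1 - R'_{\theta_p}$ for all $\theta_p$; set $g'_{\theta_p}(\mathcal{I}) = \min\{g_{\theta_p}(\mathcal{I}),\, 1 - R'_{\theta_p}\}$ and $w(\mathcal{I}) = \sum_{\theta_p \in \Theta} g'_{\theta_p}(\mathcal{I})$. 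Truncation preserves monotonicity and submodularity, and a non-negative sum of submodular functions is submodular, so $w$ is monotone non-decreasing and submodular with submodularity ratio $1$. The exact analogue of Lemma~\ref{lma2} then holds: the constraints of \eqref{eq:mmcis} are satisfied if and only if $w(\mathcal{I}) = w(\mathcal{D})$, which exhibits M-MCIS as the submodular set-cover problem $\min c(\mathcal{I})$ subject to $w(\mathcal{I}) = w(\mathcal{D})$. Running Algorithm~\ref{alg:greedy} with $w$ in place of $z$ and invoking Theorem~9 of \cite{das2018approximate} with $\gamma = 1$ gives the bound of Theorem~\ref{thm:greedy}, and the same Corollary~\ref{coro1} argument supplies the $1-\delta$, $\tilde N$-sample guarantee.

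The set-function bookkeeping is routine; the step I would treat most carefully is re-deriving the Lemma~\ref{lma2} analogue together with the feasibility hypothesis $g_{\theta_p}(\mathcal{D}) \ge 1 - R'_{\theta_p}$, since that is precisely what legitimately reduces the constrained M-MCIS problem to a pure submodular set-cover instance to which Theorem~9 of \cite{das2018approximate} applies. Everything else is a direct substitution of $g_{\theta_p}$ for $f_{\theta_p}$, with $\gamma$ upgraded from $\underline{\xi}/\bar{\xi}$ to $1$.
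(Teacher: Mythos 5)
Your proposal is correct and follows essentially the same route as the paper: both reduce M-MCIS to the truncated submodular set-cover formulation with $z(\mathcal{I})=\sum_{\theta_p}\min\{g_{\theta_p}(\mathcal{I}),1-R'_{\theta_p}\}$ and M-MPIS to maximizing $\sum_{\theta_p}g_{\theta_p}(\mathcal{I})$ under the knapsack constraint, then invoke Lemma~\ref{lma:surr} and the fact that submodular functions have $\gamma\ge 1$ to apply Theorems~\ref{thm:greedy} and~\ref{thm:greedy2} with $\gamma=1$. Your explicit verification of non-negativity, monotonicity, truncation preserving submodularity, and the Lemma~\ref{lma2} analogue simply fills in details the paper leaves implicit.
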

\begin{proof}
    Consider the M-MCIS problem defined in (\ref{eq:mmcis}). Based on similar arguments as in Section \ref{sec:weak_submod}, we can transform the M-MCIS Problem into the set covering problem defined in (\ref{eq:submod_opt}), where the constraints $z(\cdot)$ are expressed in terms of $g_{\theta_p}(\cdot)$ and $R'_{\theta_p}$ instead of $f_{\theta_p}(\cdot)$ and $R_{\theta_p}$, as follows.

    \begin{equation}
        z(\mathcal{I}) = \sum_{\theta_p \in \Theta} \min \{ g_{\theta_p}(\mathcal{I}), 1-R'_{\theta_p}\}.
    \end{equation}
    
    It follows from Lemma \ref{lma:surr} that  $z(\cdot)$ is submodular. Since we have $\gamma \ge 1$ for submodular functions (\cite{das2018approximate}), the guarantees presented in Theorem \ref{thm:greedy} for Algorithm \ref{alg:greedy} hold with $\gamma=1$ for the M-MCIS problem.

    Now consider the M-MPIS problem. We now define the following optimization problem.
    \begin{equation}
\label{eq:mmpis_max}
\begin{aligned}
 \max _{\mathcal{I} \subseteq \mathcal{D}} & \hspace{5pt} \displaystyle \sum_{\theta_p \in \Theta} \left(1- \rho_{\theta_p}(\mathcal{I}) \right);
  \hspace{2pt} \textit { s.t. } \sum_{i \in \mathcal{I}} c_i \leq K.
\end{aligned}
\end{equation}
It is easy to verify that the problem defined in (\ref{eq:mmpis_max}) is equivalent to the problem defined in (\ref{eq:mmpis}), i.e., the information set $\mathcal{I} \subseteq \mathcal{D}$ that optimizes the problem in Equation (\ref{eq:mmpis_max}) is also the optimal solution to the Problem \ref{prob:mmpis}. It follows from Lemma \ref{lma:surr} that the objective function of the Problem (\ref{eq:mmpis_max}) is submodular. Since we have $\gamma \ge 1$ for submodular functions (\cite{das2018approximate}), the guarantees presented in Theorem \ref{thm:greedy2} for Algorithm \ref{alg:greedy2} hold with $\gamma=1$ for the M-MPIS problem.
\end{proof}
\vspace{-10pt}
\section{Limitations of Approximate Submodularity}
\label{app:limit}
In this section, we present insights into the effect of the penalty values on the submodularity ratio and performance bounds of greedy. Recall that the submodularity ratio is given by $\gamma = \underline{\xi}/ \Bar{\xi} $, where 
\begin{align}
\displaystyle \underline{\xi} = \min_{\theta_p \in \Theta} \left(\min_{\theta_i,\theta_j \in \Theta}  |\xi_{pi} - \xi_{pj}|\right);
    \Bar{\xi} = \max_{\theta_p \in \Theta} \left(\max_{\theta_i,\theta_j \in \Theta}  |\xi_{pi} - \xi_{pj}|\right).
\end{align}

Assumption 2 requires that the misclassification penalties are unique, i.e., $\xi_{pi} \neq \xi_{pj}$ for $i \neq j$, for all $\theta_p \in \Theta$. However, in many practical scenarios, one may have equal (or arbitrarily close) misclassification penalties, as in the Aerial Vehicle Classification task considered in this paper. This leads to a trivial lower bound on the submodularity ratio, i.e., $\gamma \geq 0$. In such cases, one may attempt to provide an improved lower bound on the submodularity ratio $\gamma$ using problem specific insights, which is strictly bounded away from 0. Now consider the following example. 

\noindent \textbf{Example 1: } Consider an instance of the MPIS Problem with $\Theta = \{\theta_1, \theta_2, \theta_3 \}$ and the penalty matrix given by $    \Xi = \begin{bmatrix}
0 & 0.5 & 0.5\\
0.5 & 0 & 0.5\\
0.5 & 0.5 &0
\end{bmatrix}.$
 Let the set of available information sources be $\mathcal{D} = \{ i_1, i_2 \}$, with $c_1=c_2=1.$ Let the true hypothesis be $\theta_p = \theta_1$. Without loss of generality, the arguments below hold true for the cases with $\theta_p=\theta_2$ and $\theta_p = \theta_3$. Let the observationally equivalent sets for the information sources be: $F_{\theta_1}(i_1) = \{\theta_1,\theta_2\}$ and $F_{\theta_1}(i_2) = \{\theta_1, \theta_3\}$.
 
 We will now characterize the submodularity ratio for this problem instance. We need to provide a lower bound on $\gamma$, which is given by 
\begin{equation}
    \gamma \leq \frac{\sum_{a \in A \setminus B} (f(\{a\} \cup B) - f(B))}{f(A \cup B) - f(B)},
\end{equation}
for all $A,B \subseteq \mathcal{D}$, where $0/0$ is defined as 1. In order to provide non-trivial bounds on the performance of greedy algorithm, $\gamma$ should be strictly bounded away from 0. However, consider the following: $A = \{i_1, i_2\}$ and $B = \emptyset$. It is easy to see that $\sum_{a \in A \setminus B} (f(\{a\} \cup B) - f(B)) = 0$ and $f(A \cup B) - f(B) = 0.5$, where $f(\mathcal{I}) = 1 - \max_{\theta_i \in F_{\theta_1}(\mathcal{I})} \xi_{1i}.$ Thus, it follows that the lower bound on $\gamma$ is zero, which is trivial. Therefore, there exist many instances of these problems where one may not be able to provide non-trivial guarantees for the near-optimality of the greedy algorithms. 

Instead, one can turn to an alternate optimization metric, which can provide non-trivial and near-optimal bounds for approximating the optimal solution (see Section \ref{sec:alt_form}). However, establishing  performance guarantees for optimizing the alternate metric (i.e., total penalty of misclassification) in place of the actual metric (i.e., maximum penalty of misclassification) is still an open question.

\section{Additional Experiments}
\label{app:exp}
\subsection{Finite Sample Convergence of Bayesian beliefs}
We consider an instance of Problem \ref{prob:dsrc} with number of states $\Theta = 10$ and show the finite sample convergence of beliefs. We randomly sample the observationally equivalent set $F_{\theta_p}(\mathcal{I})$ for an information set $\mathcal{I} \in \mathcal{D}$. We start with a uniform prior and apply the Bayesian update rule as in (\ref{eq:bayes_full}). Figure \ref{fig:finite_sample} shows convergence of beliefs under $50$ observation samples. It can be verified that the beliefs over the states not in the observationally equivalent set of the true state $\theta_p = 0$ get arbitrarily close to zero and the beliefs over the states which are observationally equivalent to the true state $\theta_p = 0$,  i.e., $\{1,5,7,8\}$ are all equal to $0.2$, validating the results presented in Theorem \ref{thm2}.
\begin{figure}[htbp!]
    \centering
    \includegraphics[width=120pt]{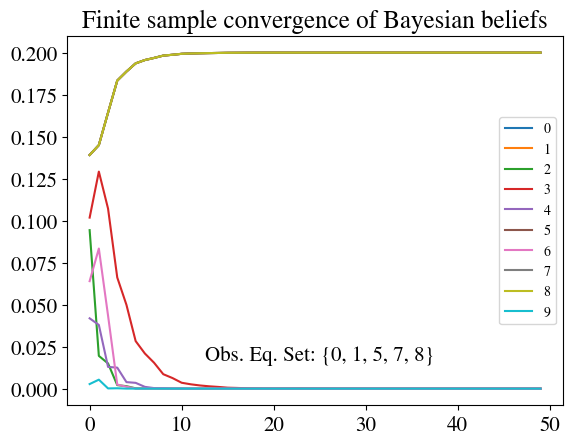}
    \caption{Finite Sample Convergence of Bayesian Beliefs}
    \label{fig:finite_sample}
\end{figure}
\vspace{-0.5cm}
\subsection{Performance of Greedy Algorithm for Varying Submodularity Ratios}
In this section, we present numerical simulations to evaluate the performance of Algorithm \ref{alg:greedy} (for MCIS Problem) and Algorithm \ref{alg:greedy2} (for MPIS problem) with varying submodularity ratios of the problem instances. Note that the submodularity ratio only depends on the penalty matrix entries.  We generate a set of 5 penalty matrices $\mathcal{P} = \{ \Xi_1, \Xi_2, \Xi_3, \Xi_4, \Xi_5 \}$, corresponding to submodularity ratio bounds $\gamma \in \{0.1, 0.2, 0.3, 0.4, 0.5 \}$, respectively. In other words, these penalty matrices ensure that the submodularity ratio for these instances are at least $\gamma$. We set  $|\mathcal{D}| = 10$. We consider the scenario with infinite observation samples, where the constraints in (\ref{eq:submod_opt}) can be completely specified by $F_{\theta_p}(i)$ for all $\theta_p \in \Theta$ and for all $i \in \mathcal{D}$, which capture the underlying likelihood functions $\ell_i(\cdot|\theta_p)$. We randomly generate the set $F_{\theta_p}(i)$ for all $i \in \mathcal{D}$ and for all $\theta_p \in \Theta$. For each penalty matrix $\Xi_i \in \mathcal{P}$, we generate 20 random instances of the MCIS problem with varying penalty thresholds and information set costs. We run Algorithm \ref{alg:greedy} to generate the greedy information set and find the optimal set through brute-force search. We plot the distributions of the ratio of cost of information set generated by greedy to that of the optimal in Figure \ref{fig:submod_exp}(a). Similarly, for each penalty matrix $\Xi_i \in \mathcal{P}$ we generate 20 random instances of the MPIS problem with varying budgets and information set costs. We run Algorithm \ref{alg:greedy2} to generate the greedy information set and find the optimal set through brute-force search. We plot the distributions of the ratio of greedy utility to that of the optimal in Figure \ref{fig:submod_exp}(b). We observe that as the submodularity ratio increases (i.e., the objective function is closer to being submodular) the performance of greedy improves.
\begin{figure}[!ht]
    \centering
    \subfigure[]{\includegraphics[width=0.3\textwidth]{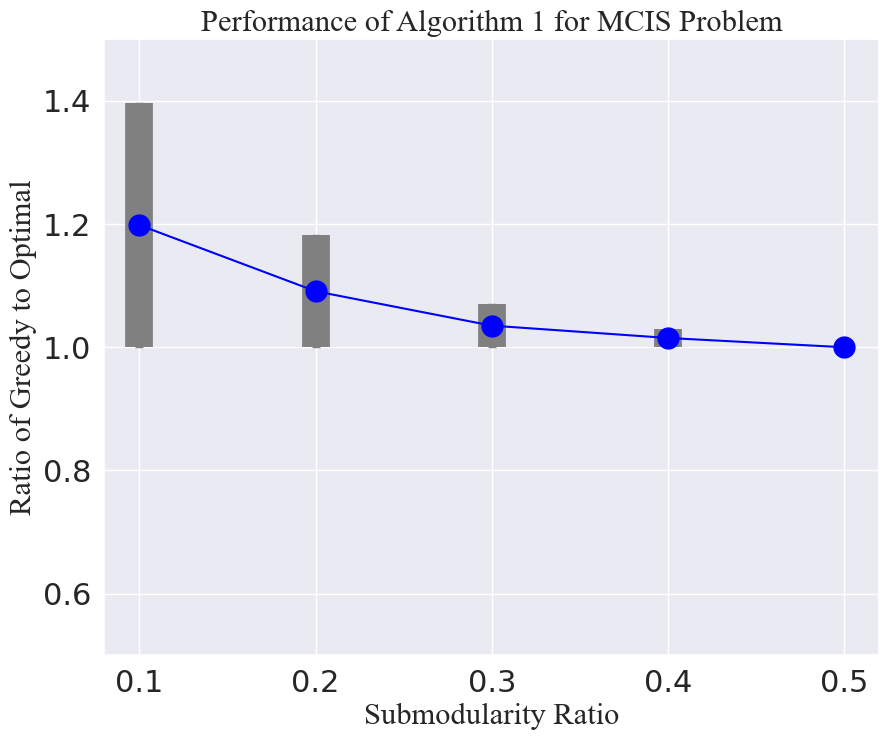}} 
    \subfigure[]{\includegraphics[width=0.3\textwidth]{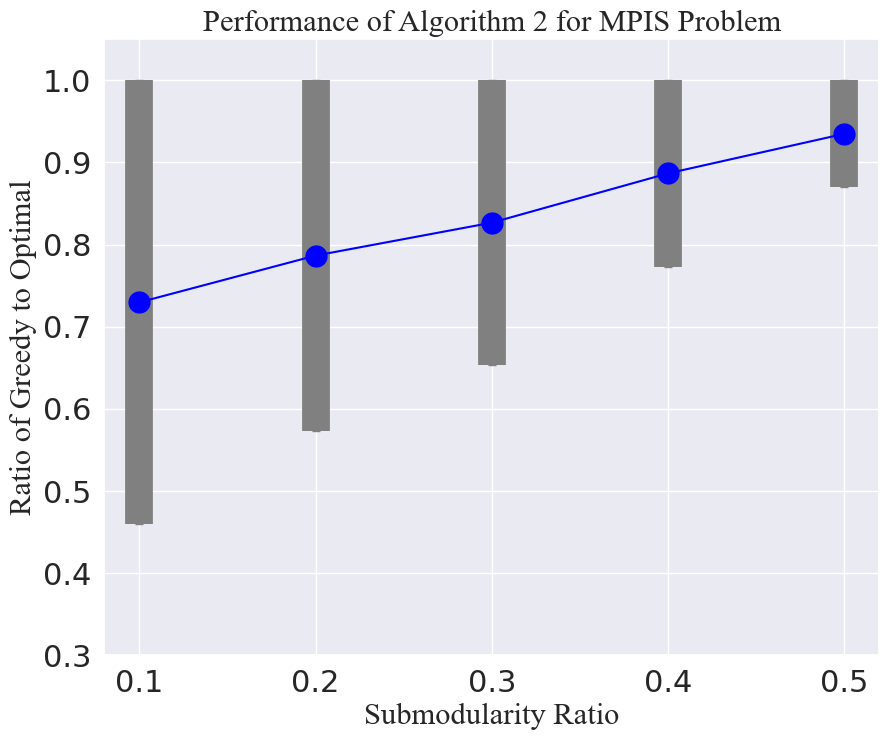}} 
    \caption{Performance of greedy algorithm for information set selection for varying submodularity ratios (a) Algorithm \ref{alg:greedy} (Problem \ref{prob:dsrc}), (b) Algorithm \ref{alg:greedy2} (Problem \ref{prob:mpis}).}
    \label{fig:submod_exp}
\end{figure}
\vspace{-1cm}
\subsection{Modified Information Set Selection Problems  }
We evaluate the performance of the greedy algorithms over random instances of the M-MCIS and M-MPIS problems. For M-MCIS problem, we generate 50 random instances, where for each instance, we set the total number of data sources $\mathcal{D}$ to be 10 and for each source $i \in \mathcal{D}$, the selection cost $c_i$ is drawn uniformly from $\{1,2,\hdots,10\}$. We generate a random row-stochastic penalty matrix $\Xi \in \mathbb{R}^{|\Theta| \times |\Theta|}$ with $|\Theta| = 20$. We consider a uniform prior $\mu_0(\theta) = 1/|\Theta|$ and set a penalty threshold of $R'/|\Theta|$, where $R$ is drawn randomly from  $\{1, \hdots, |\Theta|-1\}$ for all $\theta_p \in \Theta$. We consider the scenario with infinite observation samples, where the constraints can be completely specified by $F_{\theta_p}(i)$ for all $\theta_p \in \Theta$ and for all $i \in \mathcal{D}$, which capture the underlying likelihood functions $\ell_i(\cdot|\theta_p)$. We randomly generate the set $F_{\theta_p}(i)$ for all $i \in \mathcal{D}$ and for all $\theta_p \in \Theta$. In Figure \ref{fig:figure}(a), we plot the ratio of cost of the greedy information set to that of the optimal information set. Similarly, we generate 50 random instances of the M-MPIS problem. Here, we set uniform costs for the data sources.  For each instance, we set $|\mathcal{D}|=10$ and the randomly sample the selection budget from $\{1,2, \hdots,10\}$. In Figure \ref{fig:figure}(b), we plot the ratio of total penalty of the greedy information set to that of the optimal information set. We observe from Figure \ref{fig:figure} that the greedy algorithms exhibit near-optimal performance, which aligns with the theoretical guarantees provided by Corollary \ref{coro:submod_greedy}.
\begin{figure}[!ht]
    \centering
    \subfigure[]{\includegraphics[width=0.3\textwidth]{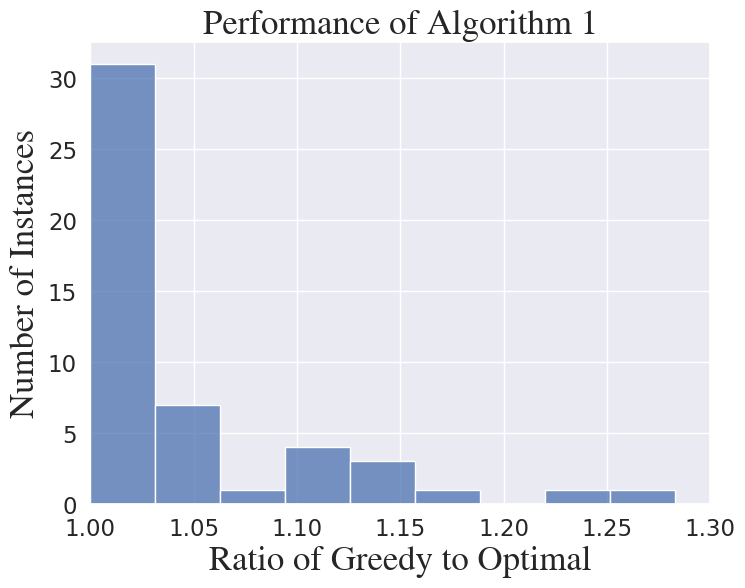}} 
    \subfigure[]{\includegraphics[width=0.3\textwidth]{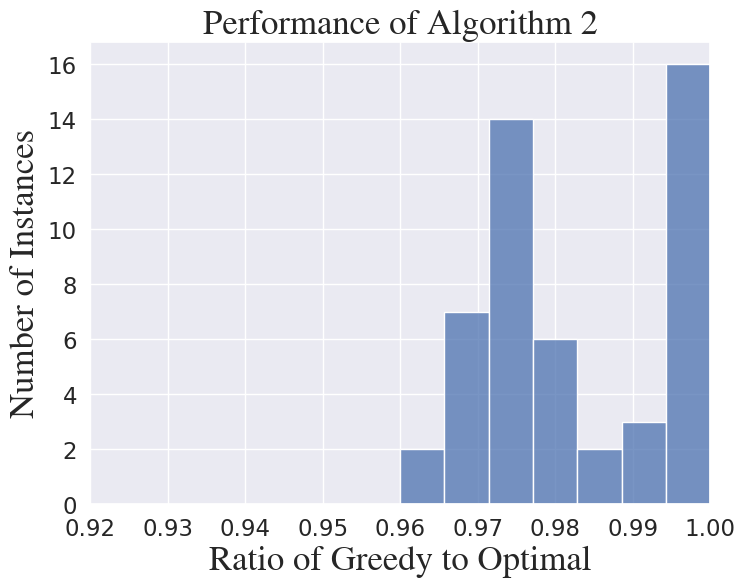}} 
    \caption{Performance of Greedy (a) Algorithm \ref{alg:greedy} (Problem \ref{prob:mmcis}), (b) Algorithm \ref{alg:greedy2} (Problem \ref{prob:mmpis}).}
    \label{fig:figure}
\end{figure}
\end{document}